
\documentclass{article}

\PassOptionsToPackage{sort&compress}{natbib}


\usepackage[accepted]{icml2025}

\usepackage[utf8]{inputenc} 
\usepackage[T1]{fontenc}    

\usepackage{url}            
\usepackage{booktabs}       
\usepackage{amsfonts}       
\usepackage{nicefrac}       
\usepackage{microtype}      
\usepackage{xcolor}         
\usepackage{graphicx}
\usepackage{adjustbox}
\usepackage{multirow}
\usepackage{rotating}

\usepackage{microtype}
\usepackage{graphicx}
\usepackage{subfigure}
\usepackage{booktabs} 


\usepackage{amsmath}
\usepackage{amssymb}
\usepackage{mathtools}
\usepackage{amsthm}


\theoremstyle{plain}
\newtheorem{theorem}{Theorem}[section]
\newtheorem{proposition}[theorem]{Proposition}
\newtheorem{lemma}[theorem]{Lemma}
\newtheorem{corollary}[theorem]{Corollary}
\theoremstyle{definition}

\theoremstyle{remark}

\usepackage[textsize=tiny]{todonotes}

\usepackage{url}
\usepackage{enumitem}
\usepackage{graphicx}
\usepackage{xcolor}
\usepackage{soul}  
\usepackage{color, soul}
\usepackage{colortbl}
\usepackage{wrapfig}
\usepackage{algorithm}
\usepackage{algorithmic}
\usepackage{multirow}
\usepackage{multicol}

\usepackage{lipsum} 

\definecolor{mydarkblue}{rgb}{0,0.08,0.45}
\usepackage[colorlinks=true, linkcolor=mydarkblue, citecolor=mydarkblue, urlcolor=mydarkblue]{hyperref}       
\usepackage{longtable}
\usepackage{listings}

\usepackage[toc,page,header]{appendix}
\usepackage{minitoc}


\usepackage{amsmath,amsfonts,bm,amssymb}


%


\newtheoremstyle{custom}
  {3pt}    
  {3pt}    
  {\itshape} 
  {}        
  {\bfseries} 
  {.}       
  { }       
  {}        

\theoremstyle{custom}
\newtheorem{counterexample}{Counterexample}
\numberwithin{counterexample}{section}







\def\eqref#1{equation~\ref{#1}}









\def\1{\bm{1}}










\DeclareMathAlphabet{\mathsfit}{\encodingdefault}{\sfdefault}{m}{sl}
\SetMathAlphabet{\mathsfit}{bold}{\encodingdefault}{\sfdefault}{bx}{n}














\definecolor{perfblue}{RGB}{64, 114, 175}

\definecolor{lightgreen}{RGB}{156,255,156}
\newcommand{\tbcolorg}{\cellcolor{lightgreen}}

\usepackage{transparent}
\newcommand{\algcommentlight}[1]{\textcolor{perfblue}{\transparent{0.8}\small{\texttt{\textbf{//\hspace{2pt}#1}}}}}

\newcommand{\ours}[0]{\texttt{ORCA}}
\newcommand{\orca}[0]{\texttt{ORCA}}
\newcommand{\orcanp}[0]{\texttt{ORCA(NP)}}

\newcommand{\tot}[0]{\texttt{TemporalOT}}
\newcommand{\threshold}[0]{\texttt{Threshold}}
\newcommand{\roboclip}[0]{\texttt{RoboCLIP}}

\newcommand{\ot}[0]{\texttt{OT}}
\newcommand{\dtw}[0]{\texttt{DTW}}

\newcommand{\xid}[0]{\tilde{\xi}}
\newcommand{\Xid}[0]{\tilde{\Xi}}
\newcommand{\od}[0]{\tilde{o}}
\newcommand{\Td}[0]{\tilde{T}}


\icmltitlerunning{Imitation Learning from a Single Temporally Misaligned Video}

\begin{document}
\doparttoc 
\faketableofcontents 

\twocolumn[
\icmltitle{Imitation Learning from a Single Temporally Misaligned Video}






\icmlsetsymbol{equal}{*}

\begin{icmlauthorlist}
\icmlauthor{William Huey}{equal,c}
\icmlauthor{Huaxiaoyue Wang}{equal,c}
\icmlauthor{Anne Wu}{c}
\icmlauthor{Yoav Artzi}{c}
\icmlauthor{Sanjiban Choudhury}{c}
\end{icmlauthorlist}

\icmlaffiliation{c}{Cornell University}

\icmlcorrespondingauthor{William Huey}{wph52@cornell.edu}
\icmlcorrespondingauthor{Huaxiaoyue (Yuki) Wang}{yukiwang@cs.cornell.edu}

\icmlkeywords{Learning from Videos, Inverse Reinforcement Learning, Reward Formulation}

\vskip 0.3in
]
\printAffiliationsAndNotice{\icmlEqualContribution} 

\begin{abstract}
We examine the problem of learning sequential tasks from a single visual demonstration.
A key challenge arises when demonstrations are \emph{temporally misaligned} due to variations in timing, differences in embodiment, or inconsistencies in execution. Existing approaches treat imitation as a distribution-matching problem, aligning individual frames between the agent and the demonstration. However, we show that such frame-level matching fails to enforce temporal ordering or ensure consistent progress.
Our key insight is that matching should instead be defined at the level of sequences. 
We propose that perfect matching occurs when one sequence successfully covers all the subgoals in the same order as the other sequence. 
We present \orca{} (ORdered Coverage Alignment), a dense per-timestep reward function that measures the probability of the agent covering demonstration frames in the correct order. 
On temporally misaligned demonstrations, we show that agents trained with the \orca{} reward achieve $4.5$x improvement ($0.11 \rightarrow 0.50$ average normalized returns) for Meta-world tasks and $6.6$x improvement ($6.55 \rightarrow 43.3$ average returns) for \texttt{Humanoid-v4} tasks compared to the best frame-level matching algorithms. 
We also provide empirical analysis showing that \orca{} is robust to varying levels of temporal misalignment.
The project website is at \url{https://portal-cornell.github.io/orca/}
\end{abstract}

\section{Introduction}

Designing reward functions for reinforcement learning (RL) is tedious~\cite{eschmann2021reward}, especially for tasks that require completing subgoals in a strict order. A more scalable way is to learn rewards from a video demonstration. However, learning from videos is challenging because demonstrations are often \emph{temporally misaligned—}variations in timing, embodiment, or execution mean that frame-level matching fails to enforce correct sequencing. To follow a demonstration, an agent must not only match states but also make consistent progress through the subgoals in the right order.



Inverse reinforcement learning (IRL)~\citep{abbeel2004apprenticeship, ziebart2008maximum} provides a principled way to infer rewards by matching the learner’s trajectory to expert demonstrations. 
Recent IRL approaches apply optimal transport (OT)~\cite{peyré2020computationaloptimaltransport} to align visual embeddings between frames~\cite{cohen2022imitation, haldar2023teach, haldar2023watch, guzey2024see, tian2024what, liu2024imitation, fu2024robot}. 
However, these methods operate at the frame level, ignoring temporal dependencies: an agent can revisit subgoals, skip ahead, or stall without penalty. As a result, frame-matching approaches fail on sequence-matching tasks, where the correct subgoal order is crucial for success.




Our key insight is that \emph{\textbf{matching should be defined at the sequence level instead of the frame level}} when learning rewards for sequence-matching tasks. Under temporal misalignment, we claim that two sequences only match if one sequence aligns with all of the other sequence’s subgoals in the same order. This leads to an intuitive and principled reward function for sequence-matching tasks: an agent should be rewarded for covering all subgoals in order, without requiring precise timing alignment.


\begin{figure*}[ht]
    \centering
    \includegraphics[width=\linewidth]{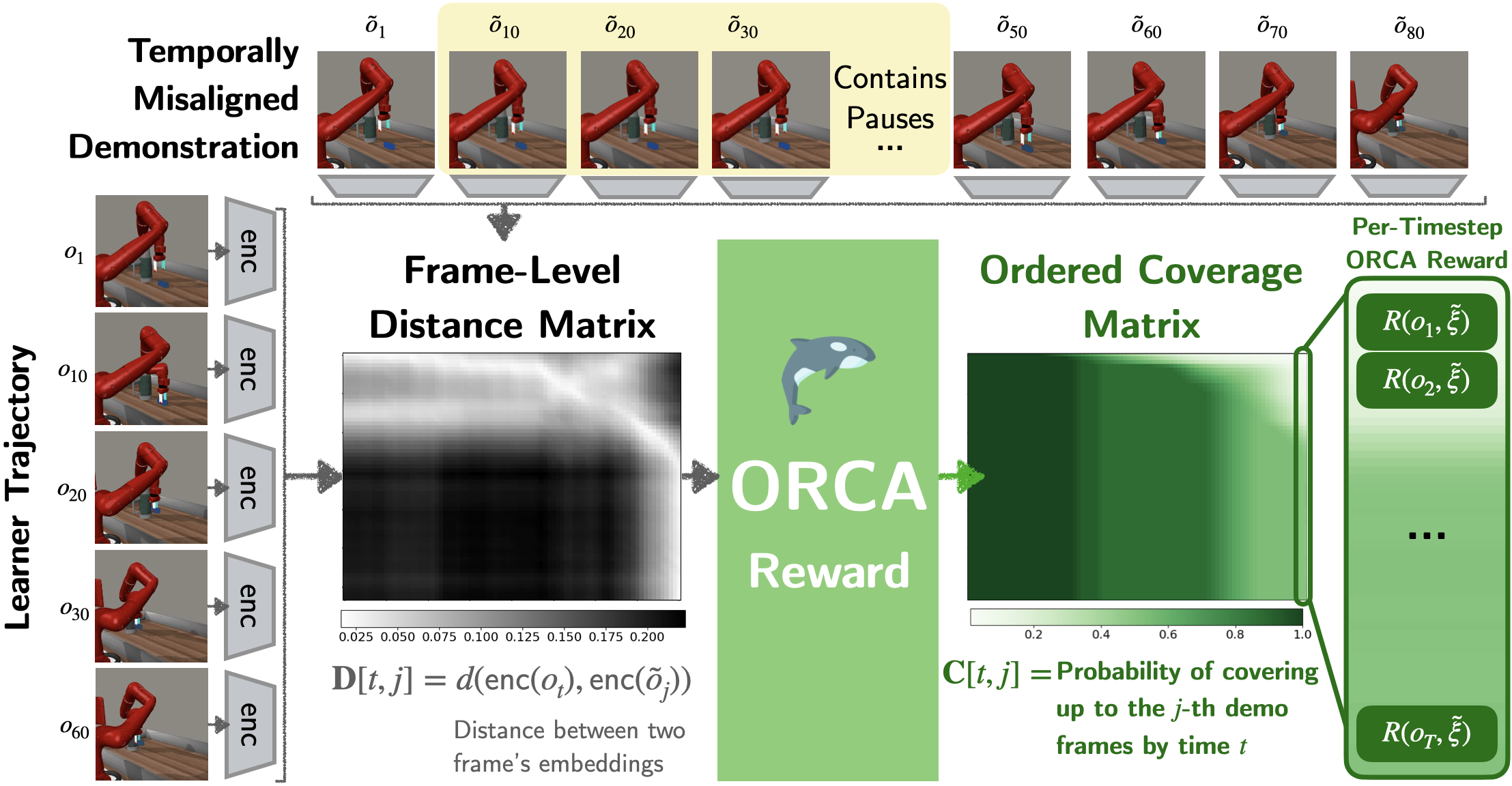}
    \vskip -0.005in
    \caption{\small \textbf{\orca{} overview.} The expert video demonstrates the \textit{Stick-Push} task, where the robot must grasp the tool before pushing the water bottle. However, this demonstration is \emph{temporally misaligned} because it contains long pauses before picking up the tool. To learn from this demonstration, \orca{} provides a per-timestep reward for the visual learner trajectory $\xi = \{o_{t}\}^{T}_{t=1}$. (1) Each frame is passed through an off-the-shelf visual encoder before calculating its distance with respect to the other frames. (2) The \orca{} reward calculates the probability that the learner has covered \emph{all} the frames in the correct order.
    }
    \label{fig:main}
\end{figure*}

We propose \orca{} (ORdered Coverage Alignment), which computes dense rewards given a single video demonstration and a visual learner trajectory, as shown in Fig.~\ref{fig:main}.
Concretely, the reward function is recursively defined as the probability that (1) the learner currently occupies the subgoal specified by the final video frame, and (2) the learner has already covered all prior frames in the correct order. 
In practice, we train an \orca{} policy via RL in two stages.
First, we initialize the policy by training with rewards that assume a temporally aligned demonstration.
Second, we refine this policy with \orca{} rewards to significantly improve its efficiency and performance.
Our key contributions are:
\vspace{-1em}
\begin{enumerate}[leftmargin=*]
    \setlength{\itemsep}{0pt}
    \item A novel, principled reward function class \orca{} that formulates the reward as an ordered coverage problem. 
    \item Analysis on the weakness of rewards based on optimal transport and other frame-level matching algorithms. 
    \item Experiments showing that the \orca{} reward can effectively and efficiently train RL agents to achieve $4.5$x improvement ($0.11 \rightarrow 0.50$ average normalized return) for Meta-world tasks and $6.6$x improvement ($6.55 \rightarrow 43.3$ average return) for \texttt{Humanoid-v4} tasks compared to the best frame-level matching approach.
\end{enumerate}

\section{Problem Formulation}
Sequence-matching problems focus on tasks where it is critical to follow the entire sequence in the correct order: e.g., for assembly tasks, robots must assemble all parts in the demonstrated order. 
We model the problem as a Markov Decision Process (MDP) $(\mathcal{S}, \mathcal{A}, \mathcal{T}, r, \gamma)$. At time $t$, the agent at state $s_t \in \mathcal{S}$ receives an \emph{image} observation $o_t \in \mathcal{O}$ of the state. On taking action $a_t \in \mathcal{A}$, it transitions to a new state $s_{t+1} \in \mathcal{S}$ and gets a new observation $o_{t+1} \in \mathcal{O}$.

\emph{We assume the reward function for this task is unknown.} Instead, we assume access to a \emph{single} visual demonstration of the task, $\xid = \{\od_1, \dots, \od_{\tilde{T}}\}$.
The robot policy must follow all subgoals in the same order as the demonstration.
Importantly, this demonstration may be \emph{temporally misaligned} with the learner’s trajectory and have a different execution speed. 
Additionally, the demonstration lacks state and action labels, rendering classical imitation learning inapplicable. 
Instead, we approach this as an inverse reinforcement learning (IRL) problem, where the reward $\mathcal{R}(o_{1:t}, \xid)$ is defined as a function of the visual demonstration $\xid$ and the learner's observations.
The goal is to learn a policy $\pi^*(a | s)$ that maximizes the expected discounted sum of rewards:
\begin{equation*}
    \pi^* = \arg\max_\pi \mathbb{E}_{\pi} \left[ \sum_{t=1}^{T} \gamma^t \mathcal{R}(o_{1:t}, \xid) \right].
\end{equation*}

\section{Desiderata of Sequence-Matching Rewards\label{sec:desiderata}}
\label{sec:dist_fail}

IRL can be formulated as a distribution matching problem between the learner and the demonstrator on ``moments", i.e., expectations of reward basis functions~\cite{swamy2021moments}. 
Hence, solving IRL is equivalent to optimizing Integral Probability Metrics (IPMs) between the learner and demonstrator distributions~\cite{sun2019provably}. Recent works use optimal transport (OT) by minimizing the Wasserstein distance (an IPM) between embeddings of the frames in the learner and demonstration trajectories. 
They work with Markovian moments that depend on only a single frame.

However, \emph{the true reward function for a sequence-matching task depends on trajectory history} to determine whether the agent has followed the subgoals in order, so it is not in the span of Markovian moments.
Instead of matching the distribution over frames, we should be matching the distribution over trajectories.
We propose that the corresponding reward should be a measure of:
\begin{enumerate}[noitemsep, topsep=0pt]
    \item \textbf{Subgoal Ordering}: A learner trajectory that completes the subgoals in the correct order should receive a higher cumulative reward than one that completes the subgoals in the wrong order.
    \item \textbf{Subgoal Coverage}: A learner trajectory that completes more of the subgoals should receive a higher cumulative reward than one that completes less of them.
\end{enumerate}

Rewards that use frame-level matching fail to satisfy these desiderata: OT fails to enforce subgoal covering (Sec.~\ref{subsec:ot_fail}), and mitigations to OT's problem fail to enforce full subgoal coverage (Sec.~\ref{subsec:dtw_fail} and~\ref{subsec:tot_fail}). 

\begin{figure}
    \centering
    \includegraphics[width=0.95\linewidth]{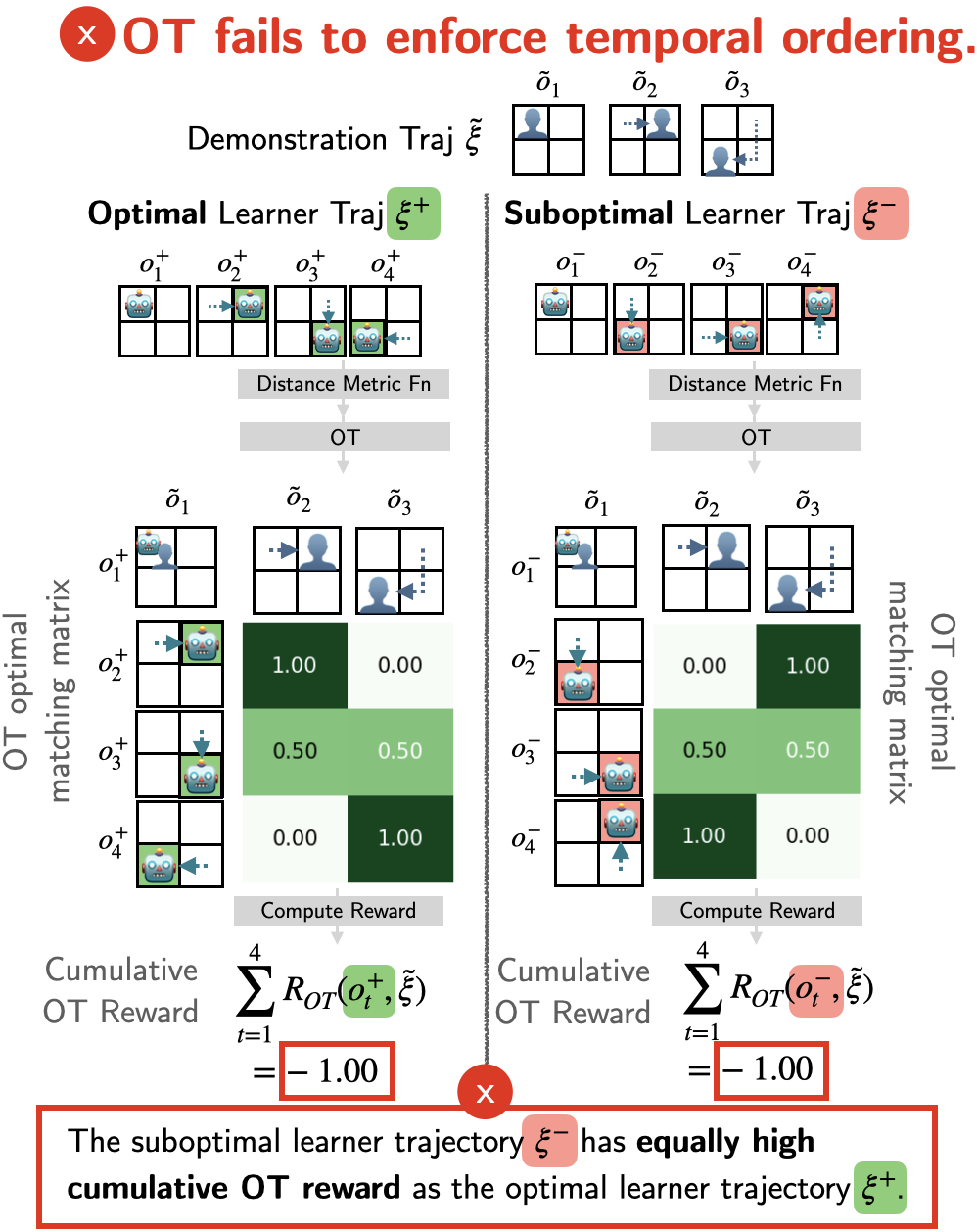}
    \caption{\small \textbf{Failure cases for OT reward.} 
    The suboptimal learner trajectory $\xi^-$ moves in the counter-clockwise direction while optimal one $\xi^{+}$ moves clockwise. Unlike the learner, the agent in the demonstration trajectory can move multiple cells per timestep. }
    \label{fig:ot_fail}
\end{figure}
\subsection{Optimal Transport Fails to Respect Ordering}
\label{subsec:ot_fail}
Prior works in imitation learning use Optimal Transport (OT)~\cite{peyré2020computationaloptimaltransport}, specifically the 
Wasserstein distance, to measure how closely a learner trajectory matches a demonstration trajectory~\citep{papagiannis2022imitation, tian2024what, fu2024robot, kedia2024oneshotimitationmismatchedexecution}.
These approaches assume a Markovian distance metric $d(\cdot,\cdot)$ that measures the distance between the embeddings of a learner and demonstration frame. 
Given a learner trajectory $\xi = \{o_{t}\}^{T}_{t=1}$ and a video demonstration $\xid = \{\od_j\}^{\Td}_{j=1}$, they define corresponding learner and demonstration distributions that are uniform in time. Specifically, $\rho = \frac{1}{T} \sum^{T}_{t=1} \delta_{o_t}$, where $\delta_{o_t}$ is a Dirac distribution centered on $o_t$. 
The demonstration similarly has a distribution of $\tilde{\rho} = \frac{1}{\Td} \sum^{\Td}_{j=1} \delta_{\od_{j}}$. 
Consequently, the matching matrices are evenly weighted, defined as the set $M = \{\mu \in \mathbb{R}^{T \times \Td}: \mu \mathbf{1} = \frac{1}{T}, \mu^T \mathbf{1} = \frac{1}{\Td}\}$.
With an entropy regularizer $\mathcal{H(\cdot)}$, OT solves for the optimal matching matrix:
\begin{equation}\label{eq:ot_mu_star}
    \mu^* = \arg\min_{\mu \in M} \sum_{t=1}^{T}\sum_{j=1}^{\Td} d(o_t, \od_j) \mu_{t, j} - \epsilon \mathcal{H}(\mu).
\end{equation}
The OT reward is:
\begin{equation}\label{eq:distr_rew}
     \mathcal{R}_\text{OT}(o_t,\xi, \xid) = - \sum_{j=1}^{\Td} d(o_t,\od_j) \mu^*_{i,j}.
\end{equation}
Although this is a non-Markovian reward function, \textit{it still matches Markovian moments}. This is clear in its failure to enforce subgoal ordering.
\begin{counterexample}
There exists an MDP (Fig.~\ref{fig:ot_fail}) where OT fails to penalize violations of temporal ordering.
\end{counterexample}
The OT reward uses a Markovian distance based on two frame embeddings, and there are no temporal constraints on $\mu^*$ in (\ref{eq:ot_mu_star}).
Thus, OT does not penalize trajectories that complete subgoals in the wrong order. Fig.~\ref{fig:ot_fail} shows an example where a suboptimal trajectory that visits the subgoals in the \emph{reversed} order has the same OT reward as the optimal one. 


\subsection{Dynamic Time Warping Fails to Cover All Subgoals}
\label{subsec:dtw_fail}
Dynamic Time Warping (DTW)~\cite{dtw} overcomes the subgoal ordering problem of OT by temporally aligning the learner and demonstration trajectories, while allowing for flexible distribution of the assignment over each demonstration frame.
Specifically, each matching matrix $\mu$ must align the beginning and the end of two trajectories (i.e., $\mu_{1, 1} = 1$ and $\mu_{T, \Td} = 1$),
and the indices of its positive entries must be non-decreasing in time.
Subject to these constraints, DTW solves for the optimal matching:
$\mu^* = \arg\min_\mu \sum_{t=1}^{T} \sum_{j=1}^{\Td} \mu_{t,j} d(o_t, \od_j)$.

Given $\mu^*$, the DTW reward is the same as (\ref{eq:distr_rew}).
DTW's temporal constraints allow it to avoid the example failure of OT (shown in Fig.~\ref{fig:ot_fail_full} of Appendix~\ref{app:toy_orca_success}), but they lead to a new failure mode.

\begin{counterexample}
There exists an MDP (Fig.~\ref{fig:dtw_fail_full}, App.~\ref{app:toy_orca_success}) where DTW fails to penalize incomplete subgoal coverage.
\end{counterexample}
Although DTW constrains the order of assignment, it does not limit the number of learner frames matched with each subgoal. Once the learner reaches an intermediate subgoal, it can achieve high DTW reward by remaining in that subgoal until the last few frames of the trajectory. Fig.~\ref{fig:dtw_fail_full} of Appendix~\ref{app:toy_orca_success} shows an example where a trajectory stuck in the second subgoal achieves the same DTW reward as a trajectory that completes all subgoals. This is a local minimum that could cause RL agents to stall at earlier subgoals.  


\subsection{TemporalOT Fails to Cover All Subgoals under Temporal Misalignment}
\label{subsec:tot_fail}
Recognizing the limitations of OT, \citet{fu2024robot} propose TemporalOT, which alters the OT objective to optimize over a temporally masked cost matrix. 
Specifically, the mask is a variant of the diagonal matrix with a hyperparameter $k_w$ to define the width of the diagonal:
\begin{equation}
    W_{t,j} = 
\begin{cases} 
1, & \text{if } j \in [t - k_w, t + k_w], \\
0, & \text{otherwise}. 
\end{cases}
\end{equation}
Consequently, TemporalOT solves for the optimal matching matrix $\mu^*\in M$ with the same constraints as (\ref{eq:ot_mu_star}):
\begin{equation}\label{eq:tot_mu_star}
\mu^* = \arg\min_{\mu \in M} \sum_{t=1}^{T}\sum_{j=1}^{\Td} W_{t,j} d(o_t, \od_j) \mu_{t,j} - \epsilon \mathcal{H}(W \odot \mu).
\end{equation}
Given $\mu^*$, the TemporalOT reward is the same as (\ref{eq:distr_rew}). 

\begin{counterexample}
There exists an MDP (Fig.~\ref{fig:tot_fail_full}, App.~\ref{app:toy_orca_success}) where TemporalOT fails to penalize trajectories that do not reach every subgoal given a temporally misaligned demonstration.    
\end{counterexample}
To define the mask window, TemporalOT makes the strong assumption that the demonstration is temporally aligned with the learner trajectory. 
Regardless of how temporally misaligned a demonstration is, TemporalOT's matching matrix always approximates a diagonal matrix, matching an equal proportion of learner frames to each subgoal. 
Fig.~\ref{fig:tot_fail_full} of Appendix~\ref{app:toy_orca_success} shows an example where a suboptimal, slower policy that does not reach later subgoals has a higher TemporalOT reward than an optimal policy that makes consistent progress to complete the task.

\section{Approach}
We introduce \ours{} (ORdered Coverage Alignment), a reward function that measures, at each time step, the probability that (1) the learner currently occupies the subgoal specified by the final video frame, and (2) it has already covered all prior frames in the correct order. Since the \orca{} reward at time $t$ depends on the learner trajectory up until $t$, it models the non-Markovian nature of sequence-matching tasks.
In Sec.~\ref{subsec:orca_analysis}, we theoretically analyze and prove that \orca{} satisfies the desiderata proposed in Sec.~\ref{sec:desiderata}.






\subsection{\ours{}: Ordered Coverage Reward Function}


We define the ordered coverage for two sequences as the probability that one sequence covers all the subgoals in the same order as the other sequence.
Ordered coverage shares the same recursive nature as the true reward function of sequence-matching tasks: it depends on how well the final subgoal is covered and how well previous subgoals have already been covered. 
We next present how to calculate ordered coverage and use it to compute rewards.

\textbf{Calculate ordered coverage via dynamic programming.}
Given a video demonstration and a learner trajectory, we can calculate ordered coverage between the two sequences by computing the matrix $C_{t,j}$, which is the probability that the segment of the learner trajectory from time $1$ to $t$ has covered the first to the $j$-th subgoals in the correct order.
By the recursive definition of ordered coverage, $C_{t,j}$ can be expressed with two components: (1) the ordered coverage until the ($j-1$)-th subgoal and (2) how well the learner is currently occupying the $j$-th subgoal. 
In addition, ordered coverage should be nondecreasing over time: i.e., once the learner has covered a subgoal at a timestep, it is always at least equally successful in future timesteps.
Let $G_{t,j}$ denote the event that the learner occupies the $j$-th demonstration subgoal at timestep $t$.
We assume that the probability for this event is proportional to the negative exponent distance:
$P(G_{t,j}) \propto \exp(-\lambda d(o_t, \od_{j}))$, where $\lambda$ is a temperature hyperparameter. 
For simplicity, we substitute $P(G_{t,j})$ with $P_{t,j}$.
Thus, we recursively calculate ordered coverage:
\begin{equation}\label{eq:coverage_recursive}
    C_{t,j} = \max \{C_{t-1, j}, C_{t, j-1}P_{t,j}\}.
\end{equation}
Algorithm~\ref{alg:dp_coverage} shows the entire computation. 

\textbf{\ours{} reward function.}
In most robotics tasks, the learner should stay in the final demonstration subgoal after covering all previous subgoals. 
A reward function that directly uses the ordered coverage for the final subgoal does not satisfy this requirement: 
even if the learner does not remain occupying the final subgoal, as long as it has covered the subgoal at one timestep, the maximum operator in (\ref{eq:coverage_recursive}) keeps the reward high for remaining timesteps. 
Instead of simply equating the reward to $C_{t, \Td}$, the final \ours{} reward at timestep $t$ is:
\begin{equation}\label{eq:orca_reward}
    \mathcal{R}_\text{ORCA}(o_t, \xi, \xid) = C_{t, \Td-1}P_{t,\Td}.
\end{equation}

\textbf{Runtime complexity.}
ORCA's runtime complexity is $\mathcal{O}(T \cdot \Td)$. This is the lower bound time complexity of any method that relies on a frame-level distance matrix, including all OT-based methods. To validate this, we experimentally tested the latency of ORCA against a variety of common baselines, finding that ORCA is faster than TemporalOT and comparable to OT. See Appendix \ref{app:runtime} for details.


\begin{algorithm}[tb]
   \caption{\orca{} Rewards.}
   \label{alg:dp_coverage}
    \begin{algorithmic}
       \STATE {\bfseries Input:} learner traj $\xi = \{o_t\}^{T}_{t=1}$, demo traj $\xid = \{\od_{j}\}^{\Td}_{j=1}$, distance function $d(\cdot, \cdot)$, temperature term $\lambda$
       \STATE \algcommentlight{Calc probability matrix}
       \STATE $P_{t,j}=\exp(-\lambda d(o_t, \od_{j})) \text{ for } t \in [[T]], j \in [[\Td]]$
       \STATE \algcommentlight{Init coverage at learner time $t=1$}
       \STATE $C_{1, 1} = P_{1,1}$
       \STATE $C_{1, j} = C_{1, j-1} P_{1, j} \text{ for } j \in \{2, 3, \ldots, \Td\}$
       \STATE \algcommentlight{Init coverage of first demo frame $j=1$}
        \STATE $C_{t, 1} = \max\{C_{t-1, 1}, P_{t, 1}\} \text{ for } t \in \{2, 3, \ldots, T\}$
       \STATE \algcommentlight{Solve recurrence relation (\ref{eq:coverage_recursive})}
       \FOR{$t=1$ {\bfseries to} $T$}
       \FOR{$j=1$ {\bfseries to} $T'$}
       \STATE $C_{t,j} = \max\{C_{t-1,j}, C_{t,j-1}P_{t,j}\}$
       \ENDFOR
       \ENDFOR
       \STATE \algcommentlight{Compute the final \orca{} rewards (\ref{eq:orca_reward})}
       \STATE {\bfseries Return} Rewards $\mathcal{R}_\text{ORCA} = \{C_{t, \tilde{T}-1}P_{t, \tilde{T}} \mid t \in [[T]]\}$ 
    \end{algorithmic} 
\end{algorithm}

\subsection{Analysis\label{subsec:orca_analysis}}
The \ours{} reward satisfies the two desiderata for a sequence-matching reward function (Sec.~\ref{sec:desiderata}), overcoming the failure modes of frame-level matching algorithms. In the following analysis, we define a subgoal $\od_{j}$ to be \textit{occupied} if $P_{t, j} \approx 1$.

\begin{proposition}[\textbf{\orca{} enforces subgoal ordering}]
Let $\xi^{-}$ be a trajectory that is out of order; specifically, there exists a subgoal $\od_{j}$ such that $\od_{j}$ is occupied at time $t$ but $\od_{j-1}$ is not yet occupied. Let $\xi^{+}$ be a trajectory that is identical to $\xi^{-}$, except that it occupies $\od_{j-1}$ before time $t$. Then, $\mathcal{R}_{\text{ORCA}}(o^{+}_t,\xi^{+}, \xid) > \mathcal{R}_{\text{ORCA}}(o^{-}_t,\xi^{-}, \xid)$.
 
 \label{prop:ordering}
\end{proposition}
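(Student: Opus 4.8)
The plan is to reduce the reward comparison to a single statement about the coverage matrix and then exploit the max--product structure of the dynamic program in (\ref{eq:coverage_recursive}). First I would note that, under the natural reading of ``identical to $\xi^{-}$ except that it occupies $\od_{j-1}$ before time $t$,'' the two trajectories share the same frame at time $t$ itself; hence the current observation $o_t$ is common and $P^{+}_{t,\Td}=P^{-}_{t,\Td}=:\pi$, with $\pi=\exp(-\lambda d(o_t,\od_{\Td}))>0$ strictly positive. Since $\mathcal{R}_\text{ORCA}(o_t,\xi,\xid)=C_{t,\Td-1}P_{t,\Td}$ by (\ref{eq:orca_reward}), the claim $\mathcal{R}_\text{ORCA}(o^{+}_t,\xid)>\mathcal{R}_\text{ORCA}(o^{-}_t,\xid)$ is equivalent to the single coverage inequality $C^{+}_{t,\Td-1}>C^{-}_{t,\Td-1}$, where $C^{\pm}$ denote the matrices produced by Algorithm~\ref{alg:dp_coverage} on $\xi^{\pm}$.

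Next I would unroll the recursion (\ref{eq:coverage_recursive}) into the closed form
\[
  C_{t,j}=\max_{1\le\tau_1\le\cdots\le\tau_j\le t}\ \prod_{m=1}^{j}P_{\tau_m,m},
\]
proved by a routine induction on $(t,j)$ against the base cases in Algorithm~\ref{alg:dp_coverage}. This identifies $C_{t,j}$ with the best \emph{ordered} alignment of the first $j$ subgoals to times $\le t$ and makes two facts immediate. Monotonicity: $C_{t,j}$ is nondecreasing in every entry $P_{a,b}$, being a maximum of products of nonnegative terms; so, modeling ``$\xi^{+}$ occupies $\od_{j-1}$ at some $s<t$'' as raising the entry $P_{s,j-1}$ from a sub-unit value to $\approx 1$ while leaving all other entries fixed, we obtain $C^{+}_{t,m}\ge C^{-}_{t,m}$ for every $m$. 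A ceiling on the bad trajectory: since every $P\le 1$, each alignment product is bounded by its $(j-1)$-th factor, so $C^{-}_{t,\Td-1}\le \max_{s'\le t}P^{-}_{s',j-1}=:\beta$, and $\beta<1$ precisely because $\od_{j-1}$ is never occupied in $\xi^{-}$. This already exhibits the mechanism: $\xi^{-}$ is capped below $1$ by its broken link at $j-1$, whereas $\xi^{+}$ has repaired that link.

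To upgrade monotonicity to a strict inequality I would first establish the local gap $C^{+}_{t,j}>C^{-}_{t,j}$: every $\xi^{-}$ alignment pays a factor $\le\beta<1$ at subgoal $j-1$, while in $\xi^{+}$ one may place subgoal $j-1$ at the occupied time $s$ (factor $\approx 1$) and subgoal $j$ at $t$ (factor $P_{t,j}\approx 1$, using that $\od_j$ is occupied at $t$), strictly beating any $\xi^{-}$ alignment. I would then propagate this strict gap from column $j$ up to column $\Td-1$: for $m>j$ the two trajectories share identical frames, so column $m$ of $P$ is unchanged, and the recursion rewrites as $C_{t,m}=\max_{s'\le t}C_{s',m-1}P_{s',m}$ with a common, strictly positive kernel $P_{\cdot,m}>0$; an induction on $m$ then carries the strict advantage in column $m-1$ into column $m$, giving $C^{+}_{t,\Td-1}>C^{-}_{t,\Td-1}$ and hence the proposition. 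The case $j=\Td$ needs no propagation, since $\od_{j-1}=\od_{\Td-1}$ enters $C_{t,\Td-1}$ directly.

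I expect the propagation step to be the main obstacle, for two linked reasons. First, the strict gap must hold not only at the query time $t$ but at whatever earlier time $s'$ realizes the maximum $C_{t,m}=\max_{s'\le t}C_{s',m-1}P_{s',m}$ in each later column, which forces me to track the gap $C^{+}_{\cdot,m}-C^{-}_{\cdot,m}$ as a function of time rather than at a single point. Second, the relocation argument underlying the base gap must respect the monotone-ordering constraint $\tau_{j-2}\le s\le\tau_j$; I would discharge this either by choosing an optimal alignment with $\tau_j=t$ so the inserted time $s<t$ is accommodated, or, more cleanly, by arguing entirely through the recursion (which builds the ordering constraint in), so feasibility is automatic. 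Strict positivity of every $P_{t,j}=\exp(-\lambda d)>0$ is what ultimately guarantees the multiplicative gap is never annihilated, even when the later subgoals are poorly covered.
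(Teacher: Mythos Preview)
Your proposal is sound and arrives at the result by a route that differs in structure from the paper's argument. The paper never unrolls the DP into your closed alignment form $C_{t,j}=\max_{\tau_1\le\cdots\le\tau_j\le t}\prod_m P_{\tau_m,m}$; instead it proves a one-step identity $C_{t,j}=\max_{i\le t} C_{i,j-1}P_{i,j}$ (their Lemma~A.1) plus two short corollaries, and then splits on the two branches of the recurrence~(\ref{eq:coverage_recursive}) to obtain the local gap $C^{+}_{t,j}>C^{-}_{t,j}$ directly. Your closed form makes the ordered-alignment picture, the monotonicity in each $P$ entry, and the ceiling $C^{-}_{t,m}\le\beta$ all transparent, and it naturally suggests why the strict gap should propagate across columns; the paper's recursion-level argument is shorter and stays closer to Algorithm~\ref{alg:dp_coverage} but is less explicit about this propagation. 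On the step you flag as the main obstacle---pushing the strict inequality from column $j$ out to column $\Td-1$---you are actually more careful than the paper, which jumps from $C^{+}_{t,j}>C^{-}_{t,j}$ to the reward inequality with the phrase ``the trajectories are otherwise identical,'' leaving the column-wise induction you sketch entirely implicit. Finally, your modeling of the perturbation (only the entry $P_{s,j-1}$ moves) is the same idealisation the paper adopts through its ``identical except'' phrasing, so neither argument pays an extra price there.
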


By (\ref{eq:coverage_recursive}), at a timestep, the ordered coverage of the current subgoal can be no greater than the coverage of the previous subgoal. Since $\xi^{+}$ occupies $\od_{j-1}$ before $t$ and $\xi^{-}$ does not, $\xi^{+}$ achieves a greater coverage of the subgoal $\od_{j-1}$ than $\xi^{-}$. The trajectories are otherwise equivalent, so $\xi^{+}$ must achieve a higher \orca{} reward at time $t$. A formal proof is given in Appendix~\ref{proof:ordering}. This overcomes OT's failure mode (Sec.~\ref{subsec:ot_fail}).

\begin{proposition}[\textbf{\ours{} enforces subgoal coverage}]
Let $\xi^{-}$ be a trajectory that occupies $\od_{j-1}$ at time $t-1$ and continues to occupy $\od_{j-1}$ at time $t$, instead of progressing towards $\od_{j}$. Let $\xi^{+}$ be an identical trajectory that progresses towards $\od_{j}$ at $t$, and assume that neither trajectory has been closer to $\od_{j}$ before. Then, $\mathcal{R}_{ORCA}(o^{+}_t, \xi^{+}, \xid) > \mathcal{R}_{ORCA}(o^{-}_t, \xi^{-}, \xid)$.
\label{prop:progress}
\end{proposition}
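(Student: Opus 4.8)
The plan is to reduce the statement to a single dynamic-programming update at time $t$ and then track how progress versus stalling changes the two factors of the reward $\mathcal{R}_{\mathrm{ORCA}}(o_t,\xi,\xid)=C_{t,\Td-1}P_{t,\Td}$. Since $\xi^{+}$ and $\xi^{-}$ are identical through time $t-1$, their coverage matrices coincide on all earlier rows, i.e. $C^{+}_{s,k}=C^{-}_{s,k}$ and $P^{+}_{s,k}=P^{-}_{s,k}$ for every $s\le t-1$ and every $k$; only the row at time $t$ can differ. Two elementary monotonicity facts drive the argument: because $P_{t,k}=\exp(-\lambda d(o_t,\od_k))\in(0,1]$, the recurrence (\ref{eq:coverage_recursive}) makes $C_{t,k}$ nonincreasing in $k$, while the explicit $\max$ with $C_{t-1,k}$ makes $C_{t,k}$ nondecreasing in $t$.

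First I would show that leaving the already-occupied subgoal $\od_{j-1}$ costs $\xi^{+}$ nothing. Although moving toward $\od_j$ lowers the current proximity to $\od_{j-1}$ (so $P^{+}_{t,j-1}<P^{-}_{t,j-1}\approx1$), the entry $C^{+}_{t,j-1}=\max\{C_{t-1,j-1},\,C^{+}_{t,j-2}P^{+}_{t,j-1}\}$ is dominated by the historical term $C_{t-1,j-1}$, which is high because $\od_{j-1}$ was occupied at $t-1$. Hence $C^{+}_{t,j-1}=C^{-}_{t,j-1}=C_{t-1,j-1}$. This is precisely the role of the $\max$ operator in (\ref{eq:coverage_recursive}): once a subgoal is covered its coverage is retained, so the penalty for departing $\od_{j-1}$ that a pure product of current proximities would impose is avoided.

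Next I would prove the strict gain at subgoal $j$. The hypothesis that neither trajectory has been closer to $\od_j$ before gives $P_{s,j}<P^{+}_{t,j}$ for all $s\le t-1$; combined with $C_{s,j-1}\le C_{t-1,j-1}$ (nondecreasing in $t$), any coverage path reaching column $j$ by time $t-1$ satisfies $C_{t-1,j}\le C_{t-1,j-1}\max_{s\le t-1}P_{s,j}<C_{t-1,j-1}P^{+}_{t,j}$. Therefore the progress term wins the $\max$, giving $C^{+}_{t,j}=C_{t-1,j-1}P^{+}_{t,j}$, whereas $C^{-}_{t,j}=\max\{C_{t-1,j},\,C_{t-1,j-1}P^{-}_{t,j}\}<C_{t-1,j-1}P^{+}_{t,j}$ since $P^{-}_{t,j}<P^{+}_{t,j}$. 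Thus $C^{+}_{t,j}>C^{-}_{t,j}$. For the representative case $j=\Td$ this finishes the proof immediately: the first reward factor is unchanged ($C^{+}_{t,\Td-1}=C^{-}_{t,\Td-1}$ by the previous paragraph) while the second strictly increases ($P^{+}_{t,\Td}>P^{-}_{t,\Td}$), so $\mathcal{R}_{\mathrm{ORCA}}(o^{+}_t,\xid)>\mathcal{R}_{\mathrm{ORCA}}(o^{-}_t,\xid)$. For $j<\Td$ I would propagate $C^{+}_{t,k}\ge C^{-}_{t,k}$ along columns $k=j,\dots,\Td-1$ using monotonicity of the update under the natural assumption that advancing in the sequence does not reduce proximity to later subgoals, and combine with $P^{+}_{t,\Td}\ge P^{-}_{t,\Td}$.

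The main obstacle is handling the two-sided behavior of the $\max$ simultaneously: I must argue that the historical term dominates at column $j-1$ (so departing the occupied subgoal is free) while the current progress term dominates at column $j$ (so approaching the next subgoal strictly helps). The ``never been closer to $\od_j$'' hypothesis is exactly what forces the second domination and supplies strictness; pinning down that strictness, and ensuring it survives the propagation to column $\Td-1$ when $j<\Td$, is the delicate part of the argument.
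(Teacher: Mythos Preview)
Your argument follows essentially the same route as the paper's. Both establish $C^{+}_{t,j-1}=C^{-}_{t,j-1}$ (the paper asserts this by saying that at time $t$ neither trajectory improves its coverage of subgoals prior to $\od_j$) and then use the ``never been closer to $\od_j$'' hypothesis to force $C^{+}_{t,j}=C^{+}_{t,j-1}P^{+}_{t,j}>C^{-}_{t,j}$, via the same two-case split on the recurrence. The paper packages the identity $C^{+}_{t,j}=C^{+}_{t,j-1}P^{+}_{t,j}$ as a standalone corollary (if $P_{t,j}=\max_{i\le t}P_{i,j}$ then $C_{t,j}=C_{t,j-1}P_{t,j}$, itself derived from a lemma giving $C_{t,j}=\max_{i\le t}C_{i,j-1}P_{i,j}$), whereas you derive the same thing directly from the monotonicity of $C$ in $t$.

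Where you are more careful than the paper is the final step. The paper stops at $C^{+}_{t,j}>C^{-}_{t,j}$ and simply writes ``the trajectories are otherwise identical'' before concluding the reward inequality. You correctly observe that the reward is $C_{t,\Td-1}P_{t,\Td}$, dispatch the case $j=\Td$ cleanly, and flag that for $j<\Td$ one must propagate the inequality along columns $j,\dots,\Td-1$, which needs the extra assumption that advancing toward $\od_j$ does not decrease proximity to later subgoals (so that $P^{+}_{t,k}\ge P^{-}_{t,k}$ for $k>j$). That caveat is well placed---the paper's proof is silent on this propagation---and your identification of it as ``the delicate part'' is accurate rather than a weakness of your proposal.
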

Both trajectories achieve the same coverage of subgoals up to $\od_{j-1}$. Since $\xi^{+}$ moves closer to the next subgoal $\od_{j}$ than $\xi^{-}$, it achieves a higher probability of occupying $\od_{j}$ and gets higher coverage of $\od_{j}$. The trajectories are otherwise equivalent, so $\xi^{+}$ must achieve a higher \orca{} reward at time $t$. A formal proof is given in Appendix~\ref{proof:progress}. This overcomes the failure modes of DTW and TemporalOT in Sec.~\ref{subsec:dtw_fail} and~\ref{subsec:tot_fail}. Appendix~\ref{app:toy_orca_success} visualizes how the \ours{} reward avoids the example failures of these frame-level matching algorithms. 

\subsection{Pretraining}

We observe that in practice, \orca{} can have multiple local minima.
This is because the agent is trying to achieve high coverage for many subgoals. It has a trade-off between covering all subgoals equally well, which is often slower, or quickly achieving high coverage for most subgoals while a small portion of them get lower (but still nonzero) coverage.
Some of these minima are undesirable, e.g. if the small portion of subgoals that the agent only partially covers are vital to the task and require the agents to match them more perfectly, the agent is more likely to fail.

To initialize the agent in a better basin, we first bias the agent towards spending an equal amount of time attempting to cover each subgoal, and then train with the \orca{} reward. 
Specifically, we pretrain the agent on a reward function that assumes the video demonstration is temporally aligned  (e.g., \tot{} rewards). 
In Section~\ref{sec:exp}, we empirically show the importance of pretraining.

\begin{table*}[!ht]
\centering
\label{tab:metaworld_mismatched}
\caption{\small \textbf{Meta-world results on temporally \textit{misaligned} demonstrations.} We report the mean expert-normalized returns with standard error, and we highlight the top-performing approaches. Multiple are included if they are within the standard error of the top score. Agents trained with \orca{} consistently outperform other frame-level matching approaches. \roboclip{} is omitted because it fails for all tasks.}
\begin{tabular}{llcccccc}
\toprule
Category & Environment & Threshold & DTW & OT & TemporalOT & \texttt{ORCA (NP)} & \textbf{\texttt{ORCA}} \\
\midrule
\multirow{2}{*}{Easy} & Button-press     & 0.30 (0.10) & 0.00 (0.00) & 0.00 (0.00) & 0.10 (0.02) & 0.45 (0.11) & \tbcolorg \textbf{0.62 (0.11)} \\
                      & Door-close       & 0.34 (0.07) & 0.00 (0.00) & 0.00 (0.00) & 0.19 (0.01) & 0.86 (0.01) & \tbcolorg \textbf{0.88 (0.01)} \\
\midrule
\multirow{5}{*}{Medium} & Door-open      & 0.00 (0.00) & 0.00 (0.00) & 0.00 (0.00) & 0.08 (0.01) & \tbcolorg \textbf{1.60 (0.09)} & 0.89 (0.13) \\
                        & Window-open    & 0.72 (0.14) & 0.00 (0.00) & 0.19 (0.06) & 0.26 (0.05) & \tbcolorg \textbf{0.86 (0.17)} & \tbcolorg \textbf{0.85 (0.16)} \\
                        & Lever-pull     & 0.07 (0.02) & 0.00 (0.00) & 0.00 (0.00) & 0.07 (0.03) & \tbcolorg \textbf{0.27 (0.08)} & \tbcolorg \textbf{0.28 (0.09)} \\
                        & Hand-insert    & 0.00 (0.00) & 0.00 (0.00) & 0.03 (0.02) & 0.00 (0.00) & \tbcolorg \textbf{0.08 (0.08)} & \tbcolorg \textbf{0.04 (0.04)} \\
                        & Push           & 0.07 (0.05) & 0.00 (0.00) & 0.03 (0.01) & 0.01 (0.01) & 0.02 (0.02) & 0.00 (0.00) \\
\midrule
\multirow{3}{*}{Hard}   & Basketball     & 0.00 (0.00) & 0.00 (0.00) & 0.00 (0.00) & 0.01 (0.01) & \tbcolorg \textbf{0.07 (0.03)} & 0.01 (0.00) \\
                        & Stick-push     & 0.12 (0.04) & 0.00 (0.00) & 0.07 (0.02) & 0.36 (0.00) & 0.46 (0.13) & \tbcolorg \textbf{1.25 (0.04)} \\
                        & Door-lock      & 0.00 (0.00) & 0.05 (0.02) & 0.04 (0.02) & 0.00 (0.00) & \tbcolorg \textbf{0.23 (0.09)} & \tbcolorg \textbf{0.19 (0.08)} \\
\midrule
                       & \textbf{Average} & 0.16 (0.02) & 0.01 (0.00) & 0.04 (0.01) & 0.11 (0.01) & \tbcolorg \textbf{0.49 (0.04)} & \tbcolorg \textbf{0.50 (0.04)} \\
\bottomrule
\end{tabular}
\end{table*}

\begin{table}[t]
\centering
\caption{\textbf{Humanoid results on temporally misaligned demonstrations.} Results are presented as the mean returns with standard error. TemporalOT is abbreviated to TOT. For results of all baselines, see Table~\ref{tab:mujoco_table_full} in Appendix~\ref{app:additional_experiments}.}
\begin{tabular}{lccc}
\toprule
Task & TOT & ORCA (NP) & \textbf{ORCA} \\
\midrule
Arm up (L)     & 5.29 (2.22)  & 65.9 (8.25) & \tbcolorg \textbf{81.6 (3.65)} \\
Arm up (R)    & 7.67 (2.88)  & \tbcolorg \textbf{92.5 (4.71)} & 49.6 (5.00) \\
Arms out     & 1.62 (0.75)  & \tbcolorg \textbf{72.7 (10.1)} & 8.50 (2.60) \\
Arms down    & 11.6 (3.56) & 19.7 (5.03) & \tbcolorg \textbf{33.4 (7.20)} \\
\midrule
Average      & 6.55 (2.35)  & \tbcolorg \textbf{62.9 (7.02)} & 43.3 (4.61) \\
\bottomrule
\end{tabular}
\label{tab:mujoco}
\end{table}



\section{Experiments\label{sec:exp}}

\subsection{Experimental Setup}
\textbf{Environments.} We evaluate our approach across two environments (details in Appendix~\ref{app:env_details}):
\begin{itemize}[nosep, leftmargin=*]
    \item \textbf{Meta-World~\cite{yu2021metaworldbenchmarkevaluationmultitask}}. Following~\citet{fu2024robot}, we use ten tasks from the Meta-world environment to evaluate the effectiveness of \ours{} reward in the robotic manipulation domain. 
    On each rollout, the locations of task-relevant objects are randomized according to the default setup of Metaworld.
    The training and evaluation seeds are also different.
    We classify the tasks into three difficulty levels based on the number of types of motions required and whether the task requires precision. Visual demonstrations are generated using hand-engineered policies provided by the environment.
    \item \textbf{Humanoid.} We define four tasks in the MuJoCo \texttt{Humanoid-v4} environment~\citep{mujoco} to examine how well \ours{} works with precise motion. Because there is no predefined expert, we obtain visual demonstrations by rendering an interpolation between the initial and goal joint state.
\end{itemize}

\textbf{RL Policy.} For Meta-world, we follow the RL setup in ~\citet{fu2024robot}. We train DrQ-v2 \cite{yarats2021masteringvisualcontinuouscontrol} with state-based input for 1M steps and evaluate the policy every 10k steps on 10 randomly seeded environments. 
For the Humanoid environment, we train SAC~\cite{haarnoja2018softactorcriticoffpolicymaximum} for 2M steps and evaluate the policy every 20k steps on 8 environments.
All policies use state-based input, and in metaworld we include an additional feature that represents the percentage of total timesteps passed. 
Appendix~\ref{app:rl} contains RL training details and hyperparameters.

\textbf{Baselines.} We compare \ours{} against baselines that use frame-level matching algorithms: \ot{}~\cite{tian2024what}, \tot{}~\cite{fu2024robot}, and \dtw{}~\cite{dtw}. 
We also compare \orca{}, which is pretrained on TemporalOT rewards for half of the total timesteps and ORCA rewards for the remaining timesteps, against \orcanp{}, which fully trains on ORCA rewards without any initialization.
All approaches use the pretrained ResNet50~\cite{he2015deepresiduallearningimage} to extract visual features and cosine similarity as the distance function.
We include a simple baseline \threshold{}, which tracks the subgoals completed based on a threshold distance, and a transformer-based approach \roboclip{}~\citep{sontakke2024roboclip}, which directly encodes an entire video.
Details are in Appendix~\ref{app:baselines}. We also investigate a language-conditioned baseline and a traditional IRL baseline on a subset of the tasks in Appendix~\ref{app:addl_baselines}.

\textbf{Metrics.} We evaluate the final checkpoints of all approaches on cumulative binary rewards, or \textbf{returns}, so a policy that succeeds quickly and remains successful is better.
In Meta-world, we use the ground-truth sparse rewards and report the normalized return, which is the return as a fraction of the expert's return on the same task, given the same number of timesteps.
We define a success metric for Humanoid, using privileged states, which no approaches have access to.
An agent is successful if it can remain standing (torso height above 1.1) and its arm joint position is close to the goal joint position (Euclidean distance less than 1). There is no expert in for Humanoid tasks, so we report unnormalized returns.

\begin{figure*}[ht]
    \centering
    \includegraphics[width=\linewidth]{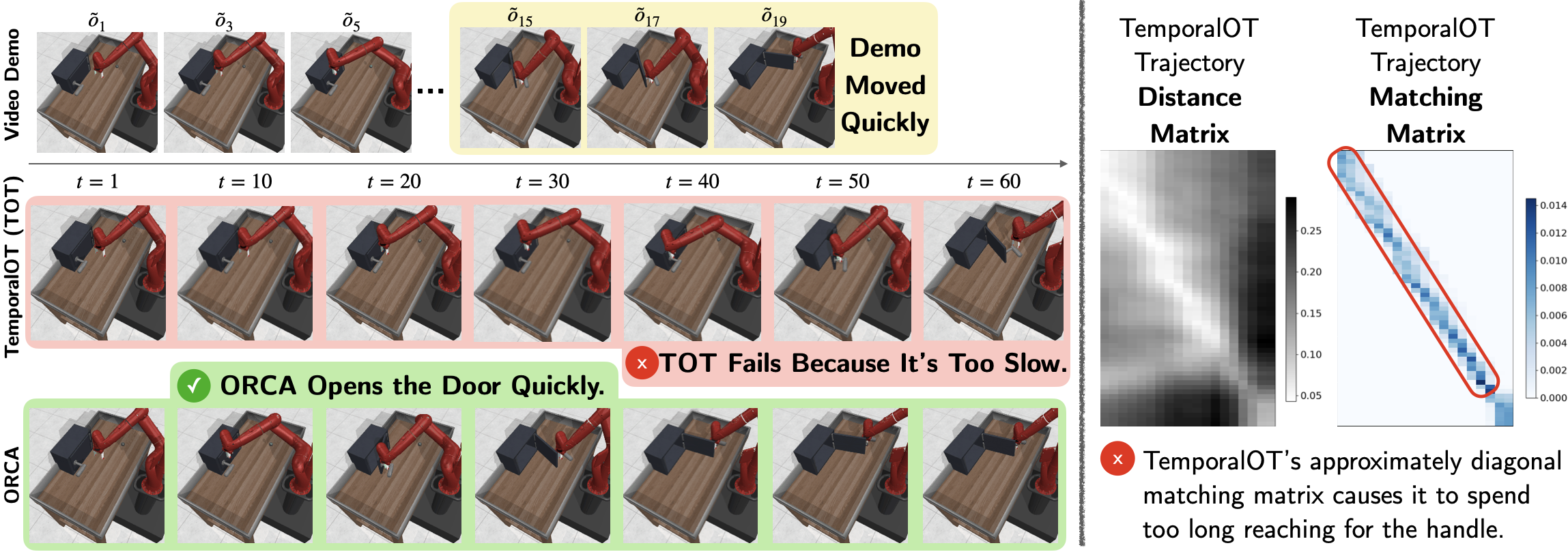}
    \vskip -0.025in
    \caption{\small \textbf{Qualitative example of \tot{} failing to encourage full subgoal coverage.} The video demonstration shows how to open a door by latching on the door handle, but it speeds through the movement after latching. \tot{} trains a slow agent that fails to complete the task due to its diagonal-like matching matrix, but \orca{} trains a successful agent that completes the task efficiently.  
    }
    \label{fig:tot_slow}
\end{figure*}

\subsection{How well does \orca{} perform given temporally misaligned demonstrations?}


\textbf{Metaworld.} For each original demonstration, we subsample it such that the first one-fifth retains the original execution speed, but the rest is sped up by five to ten times. 
In Table~\ref{tab:metaworld_mismatched}, the \orca{} reward significantly outperforms other baselines and trains agents to acquire the highest average normalized return of $0.50$.
The \textit{Push} task is difficult for all approaches and the only one where \orca{} or \orcanp{} does not achieve top performance.
As analyzed by~\citet{fu2024robotrebuttal}, the distance matrix is generally noisier for the \textit{Push} task due to the small size of the target object, which benefits a sparser reward function like \threshold{}.
Meanwhile, rewards based on frame-level matching algorithms perform poorly on all tasks, revealing their weakness when encountering temporal misalignment. 

\textbf{Humanoid.} Because there does not exist an expert policy for the Humanoid tasks, we generate demonstrations by interpolating ten frames between the initial and final joint positions. 
These demonstrations are naturally temporally misaligned because the environment is unstable, making it impossible to follow the subgoals at the same speed.
Table~\ref{tab:mujoco} shows that agents trained with \orcanp{} achieve the highest average cumulative return of $62.9$. Due to poor \tot{} performance, \orca{} does not benefit from pretraining, and \orcanp{} thus performs better because it is trained on the ordered coverage reward for more steps. We further investigate the failure of \tot{} in Sec.~\ref{exp:progress}. Meanwhile, Fig.~\ref{fig:humanoid_qualitative} in Appendix~\ref{app:additional_experiments} shows how the \orca{} reward satisfies Prop.~\ref{prop:ordering} and \ref{prop:progress}, covering the subgoals as quickly as possible and successfully completing the task.

\textbf{Varying Misalignment Level.}
We identify two types of temporal misalignment: either a slower demonstration that contains pauses, or a faster demonstration that accelerates through a segment.
For each misalignment type, we randomly perturb the original demonstrations of three Meta-world tasks (\textit{Door-open}, \textit{Window-open}, \textit{Lever-pull}), where the misalignment level controls how varied and nonlinear speed changes are. 
See appendix~\ref{app:random_mismatch_setup} for details.

In Fig.~\ref{fig:meta_random_mismatched}, \orca{} consistently maintains a higher return compared to \tot{} as the demonstrations become more misaligned. 
Meanwhile, \tot{}'s performance significantly deteriorates when there is any level of misalignment.
When the demonstrations are sped up, because \tot{} encourages agents to spend an equal amount of time at each subgoal (in-depth discussion in Sec~\ref{exp:progress}), \tot{} agents often cannot finish the task in time. 
This problem is further exacerbated when the demonstrations are slowed down and longer than the learner trajectory.
\orca{}'s performance also worsens more given slower demonstrations compared to faster ones.

In addition to being affected by poor initialization due to \tot{}'s poor performance, we observe that when \orca{} agents fail, they successfully follow the general motions of the demonstration, but they miss details (e.g., aligning the gripper with the target object). 
We hypothesize that this behavior is caused by the frame-level distance metric, which pays more attention to the general robot arm motions than details, allowing most subgoals to achieve relatively good coverage. 


\subsection{How important is enforcing subgoal ordering?}
Fig.~\ref{fig:meta_order_fail} in Appendix~\ref{app:metaworld_qual} shows a key failure point for \ot{}: its matching matrix can match later subgoals to earlier learner frames and vice versa.
The optimal matching matrix minimizes the transport cost regardless of the order in which subgoals are completed, thereby giving higher \ot{} rewards to trajectories that violate the temporal ordering. In both Meta-world and Humanoid environments, the OT rewards create a difficult optimization landscape that causes the agents to learn undesirable behaviors. 
Fig.~\ref{fig:meta_order_fail} also demonstrates that \tot{} can violate temporal ordering depending on the demonstration length and the mask window size. 
Tuning this value is a major drawback of \tot{} because it requires prior knowledge of the temporal alignment.

\begin{figure}[ht]
    \centering
    \includegraphics[width=\linewidth]{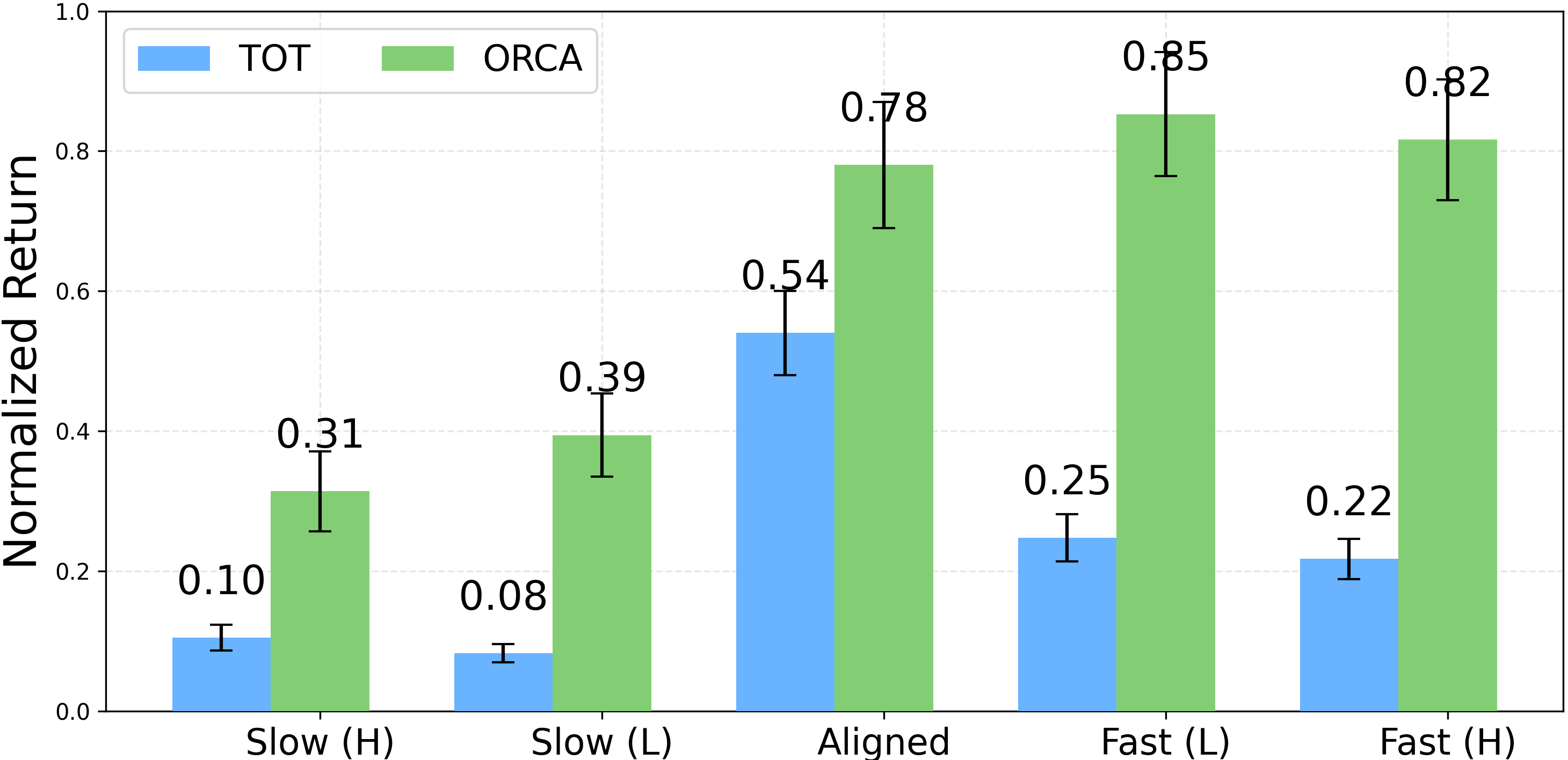}
    \vskip -0.025in
    \caption{\small \textbf{Results given varying levels of temporal misalignment.} We report the mean expert-normalized returns with standard error across 3 Meta-world tasks (Door open, Window open, Lever pull). We generate 3 perturbed demonstrations per task per misalignment level (L=Low, H=High), training on each demonstration separately. 
    }
    \label{fig:meta_random_mismatched}
\end{figure}

\begin{figure}
    \centering
    \includegraphics[width=\linewidth]{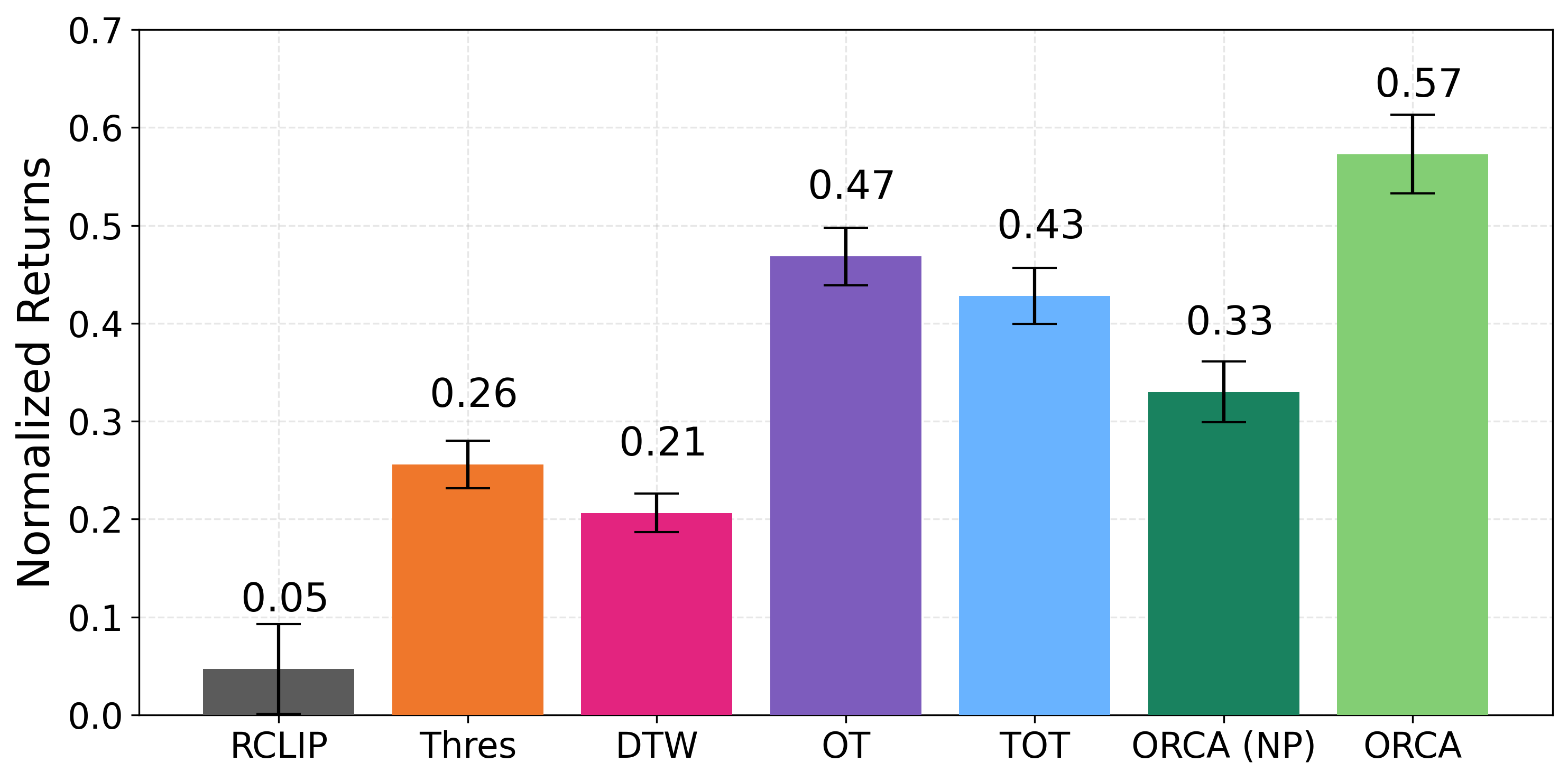}
    \vskip -0.05in
    \caption{\small \textbf{Results given temporally \textit{aligned} demonstrations}. We report mean expert-normalized returns with standard error across all 10 Meta-world tasks.
    }
    \label{fig:meta_aligned_bar}
\end{figure}

\subsection{{How important is enforcing full subgoal coverage?\label{exp:progress}}}
Fig.~\ref{fig:tot_slow} shows that \orca{} successfully trains an efficient agent, while the \tot{} agent opens the door much more slowly and fails. 
The \ot{}/\tot{} formulation assumes that the learner and demonstration distributions are uniform in time. 
However, given a temporally misaligned demonstration, different portions of learner frames should be matched to different subgoals, which is impossible using this formulation.
Empirically, successful trajectories produce coupling matrices that approximate the diagonal matrix, as shown in Fig.~\ref{fig:tot_slow}. 
The subsequent rewards teach the agent to spend an equal amount of time at each subgoal instead of following the demonstration as fast as possible, which can cause the agent to exhaust the timesteps before completing the task. 
\tot{} also exhibits poor performance for Humanoid tasks. We hypothesize that the agent fails because \tot{} rewards force the agent to spend an equal amount of time in each intermediate subgoal, which is difficult in a highly unstable environment. 
We show \dtw{}'s failure case in Fig.~\ref{fig:meta_dtw_fail} of Appendix~\ref{app:metaworld_qual}.


\subsection{How does pretraining affect ORCA's performance?}
Pretraining leads to better \orca{} performance when the pretraining strategy is able to obtain some success. In Table.~\ref{tab:metaworld_mismatched}, \orca{} is equal or better than \orcanp{} on temporally misaligned demonstrations. In Humanoid tasks, \orcanp{} achieves higher overall performance because \tot{} almost entirely fails on every task. The effect of pretraining is most apparent when the demonstrations are temporally aligned because this setting satisfies the core assumption of \tot{}. 
In Fig.~\ref{fig:meta_aligned_bar}, \orca{} achieves an average normalized return of $0.57$ compared to \orcanp{} ($0.33$) on aligned demonstrations.

\orcanp{} fails due to undesirable local minima.
Consider the \textit{Stick-Push} task in Fig.~\ref{fig:meta_no-pretrain_fail} of Appendix~\ref{app:metaworld_qual}, where the robot arm needs to grasp the stick before pushing the water bottle with that stick.
The \orcanp{} policy directly moves to push the water bottle without the stick.
Because the robot arm initially seems close to the stick, the \orcanp{} policy could get partial coverage on earlier subgoals while collecting high coverage on later subgoals for pushing the bottle. 
Meanwhile, \orca{} is initialized with the \tot{} policy that fails the task but spends equal time attempting each subgoal.
\orca{} is able to refine the policy, finding the middle ground between \orcanp{} and \tot{}and quickly grasping the stick before pushing the water bottle with it. 
Overall, pretraining initializes \orca{} in a better basin, allowing it to train successful and efficient policies.

\subsection{Does ORCA scale with more demonstrations?}

Following the strategy of \citet{fu2024robot}, \orca{} can be adapted to multiple demonstration videos $\Xid=\{\xid^1,\dots,\xid^N\}$ by calculating the \orca{} reward at each timestep with respect to every video and max-pooling:
\begin{equation}
    \mathcal{R}_{\text{ORCA}}(o_t, \xi, \Xid) 
= \max_{\xid \in \Xid} \mathcal{R}_{\text{ORCA}}(o_t, \xi, \xid)
\end{equation}
To study how the number of demonstrations affects performance, we first investigate the setting where they all have the same speed, and the demonstrations only vary due to randomly initialized object locations. Figure \ref{fig:more_demos} shows how \orca{}'s performance improves as the number of demonstrations increases. In contrast, although \tot{} handles multiple videos using the same method, and it initially benefits from having more than one demonstration, its performance starts to degrade given a large number of demonstrations.

We additionally train policies using 4 demonstrations with different speeds by randomly sampling one video from each of the four temporal misalignment categories described in Figure \ref{fig:meta_random_mismatched}. \orca{}  outperforms \tot{} in this scenario, as shown in figure \ref{fig:more_demos}.
\orca{} policies additionally perform better when trained on 4 demonstrations with different speeds than just a single demonstration ($0.67 \pm 0.08\rightarrow 0.77 \pm 0.08$), although they are still worse than 4 same speed demonstrations.
Meanwhile, \tot{} shows minimal gains with different speed demonstrations ($0.14 \pm 0.02 \rightarrow 0.16 \pm 0.02$), demonstrating its inability to handle temporal misalignment at scale. Overall, \orca{} consistently benefits from training with multiple demonstrations and exhibits greater robustness compared to baselines when these demonstrations have different speeds.

\begin{figure}
    \centering
    \includegraphics[width=\linewidth]{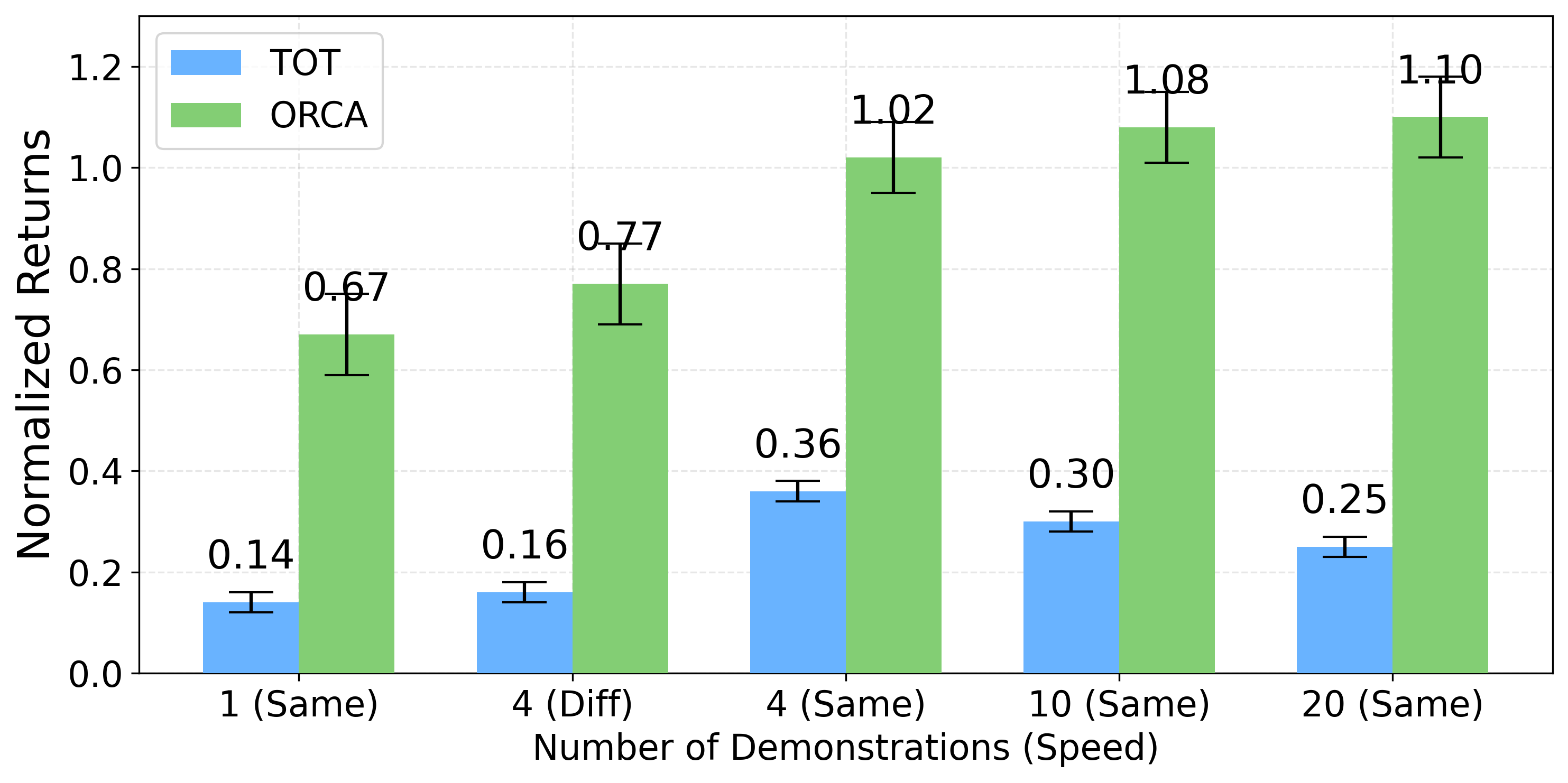}
    \vskip -0.05in
    \caption{\small \textbf{Results given more demonstrations}. \textit{Same} indicates that all demonstrations in the set are the same speed, whereas \textit{Diff} indicates that they are different speeds. We report mean expert-normalized returns with standard error across 3 Meta-world tasks (Door open, Window open, Lever pull).
    }
    \label{fig:more_demos}
\end{figure}

\subsection{How does ORCA perform given a policy conditioned on image observations?}

\orca{}'s goal is to estimate rewards from a single demonstration in the observation space before learning a policy, making no assumptions about the policy itself. We additionally evaluate \orca{} when the policy is conditioned on image observations instead of ground-truth states, following the DrQv2 setup of \cite{fu2024robot}. The results in Appendix~\ref{app:image_condition} demonstrate that the state-based and image-based policies trained with \orca{} have similar performance (average normalized return: $0.704 \pm 0.10 \rightarrow 0.76 \pm 0.06$). In contrast, policies trained with \tot{} perform poorly, regardless of their input.

\subsection{Visual Encoder Ablation}

\orca{} works with any visual encoder that can act as a distance metric between frames. We include in Appendix~\ref{app:visual_encoder} an additional ablation of \orca{} with LIV \cite{ma2023liv}, a robotics-specific visual encoder, and DINOv2 \cite{oquab2023dinov2}, a standard vision model. The Resnet50 used for the main result in this paper achieves the best performance. However, there is high variability, substantiating the findings of prior work that evaluates different visual encoders for RL training \cite{hu2023pretrainedvisionmodelsmotor}. Ultimately, the best visual encoder is a practical and task-dependent choice.

\section{Related Works}
%


\textbf{Learning From a Few Video Demonstrations.} 
Prior works in imitation learning~\cite{jain2024vid2robot, fu2024incontextimitationlearningnexttoken,xu2022promptdt,yan2017oneshot,palo2024kat} train a policy that, at inference time, takes a video-action demonstration and the current robot state to output actions. This is different from our policy formulation because ORCA and its baselines do not have access to demonstrations with action labels. Closer to our setting, some works focus on training a reward model that takes videos as input and outputs scalar rewards for RL~\cite{sontakke2024roboclip, rank2reward}, but these models often require fine-tuning on task-specific data to improve performance~\cite{fu2024furlvisuallanguagemodelsfuzzy}. 
We follow IRL's formulation where we aim to match the learner and demonstration distributions, which is equivalent to optimizing the Integral Probability Metrics (IPMs)~\cite{sun2019provably, swamy2021moments}.

\textbf{Optimal Transport Used In IRL.} Recent works leverage OT~\cite{peyré2020computationaloptimaltransport} to optimize IPMs. 
In the imitation learning setting, where a teleoperated demonstration dataset contains state-action labels, prior works can directly minimize the Wasserstein distance between the learner and the demonstration's state-action distributions~\cite{xiao2019wassersteinadversarialimitationlearning, dadashi2020primal, papagiannis2022imitation, luo2023optimal, bobrin2024alignintentsofflineimitation}.
These approaches do not work given only a single visual demonstration.
Without access to privileged states, recent works instead use a distance function that measures the transport cost between two visual embeddings~\cite{cohen2022imitation, haldar2023teach, haldar2023watch, guzey2024see, tian2024what, liu2024imitation, fu2024robot, kedia2024oneshotimitationmismatchedexecution}.
Both types of approaches assume that a Markovian function measuring transport cost between two timesteps is sufficient. 
However, our work tackles sequence-matching problems, where agents must follow the subgoals from a temporally misaligned demonstration in the correct order. 
OT with a Markovian distance function fails because the true reward function depends on the entire trajectory, a limitation that our approach addresses.

\section{Discussion}
We investigate sequence-matching tasks, where the learner must follow a expert video demonstration that may be temporally misaligned. 
We analyze how algorithms that match the learner and expert distribution at the frame level (e.g., optimal transport) result in reward functions that fail in this setting.
Following our key insight that matching should be defined at the sequence level, we present \orca{}: a principled reward function that computes the probability that the learner has covered every subgoal in the correct order. 
Experiments on Meta-world and Humanoid tasks show that \orca{} rewards train agents that complete the tasks efficiently regardless of the level of temporal misalignment.

We recognize a few limitations of ORCA: (1) it relies on a good visual distance metric that measures the similarity between two frames. Future work will explore using online finetuning to improve the encoder~\cite{fu2024furlvisuallanguagemodelsfuzzy} and solve cross-embodiment tasks where temporal misalignment is common. (2) video-following may result in unexpected failures due to task misspecification. We plan to explore how \orca{} can be applied to language subgoals in addition to video frames.

\section*{Acknowledgments}
This work was supported in part by the National Science Foundation FRR (\#2327973).

\section*{Impact Statement}
This paper presents work whose goal is to advance the field of Machine Learning. There are many potential societal consequences of our work, none which we feel must be specifically highlighted here.



\bibliography{bibs/consolidated}
\bibliographystyle{icml2025}

\newpage
\onecolumn 
\appendix
\part{Appendix} 
\parttoc 
\section{Detailed Analysis of \orca{}}
We prove propositions \ref{prop:ordering} and \ref{prop:progress}, showing that the \orca{} reward encourages the agent to complete all subgoals in the correct order, thus addressing the limitations of baselines that use frame-level matching.
First, we derive basic properties of the \orca{} coverage matrix. Then, we restate the propositions and prove them using these properties.

\subsection{Proofs of \orca{} Desiderata}
\begin{lemma}
For all t, $C_{t, j} = \max_{i=1}^{t} C_{i, j-1} P_{i, j}$.
\label{lemma:induction}
\end{lemma}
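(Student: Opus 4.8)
The plan is to prove the identity by induction on the learner timestep $t$, holding the subgoal index $j$ fixed (with $j \geq 2$ so that the column $C_{\cdot, j-1}$ appearing on the right-hand side is defined). The only two ingredients needed are the recurrence (\ref{eq:coverage_recursive}) and the initialization lines of Algorithm~\ref{alg:dp_coverage}; the claim is essentially just the statement that the $\max$-recursion, when unrolled, collects one candidate term $C_{i,j-1} P_{i,j}$ for each $i$ up to the current time.

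For the base case $t = 1$, the algorithm sets $C_{1,j} = C_{1,j-1} P_{1,j}$, which is exactly $\max_{i=1}^{1} C_{i,j-1} P_{i,j}$, since a maximum over the single index $i = 1$ is that term itself. For the inductive step, I assume the claim holds at $t-1$, i.e. $C_{t-1,j} = \max_{i=1}^{t-1} C_{i,j-1} P_{i,j}$. Applying (\ref{eq:coverage_recursive}) gives $C_{t,j} = \max\{C_{t-1,j},\, C_{t,j-1} P_{t,j}\}$. Substituting the inductive hypothesis for the first argument and noting that $C_{t,j-1} P_{t,j}$ is precisely the $i = t$ candidate, the nested maxima merge into a single maximum over $i \in \{1,\dots,t\}$, yielding $C_{t,j} = \max_{i=1}^{t} C_{i,j-1} P_{i,j}$ and closing the induction.

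There is no genuine obstacle here; the lemma is a direct unrolling of the coverage recursion and serves as a convenient closed form for the subsequent proofs of Propositions~\ref{prop:ordering} and~\ref{prop:progress}. The only points that require care are bookkeeping: the recurrence (\ref{eq:coverage_recursive}) applies only for $t \geq 2$, so the base case at $t = 1$ must be discharged separately via the initialization $C_{1,j} = C_{1,j-1} P_{1,j}$, and one should keep in mind the implicit assumption $j \geq 2$ throughout so that every referenced entry of the coverage matrix is well defined.
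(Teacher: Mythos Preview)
Your proof is correct and follows essentially the same induction-on-$t$ argument as the paper's own proof: establish the $t=1$ base case from the initialization $C_{1,j}=C_{1,j-1}P_{1,j}$, then use the recurrence (\ref{eq:coverage_recursive}) to absorb one more candidate term into the running maximum. Your additional care about the implicit $j\geq 2$ assumption and the need to invoke the initialization (rather than the recurrence) at $t=1$ is a nice touch that the paper glosses over.
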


\begin{proof}
We prove this by induction on $t$.

\textbf{Base case:} For $t=1$, $C_{1, j} = C_{1, j-1} P_{1, j}$ by (\ref{eq:coverage_recursive}), which satisfies the statement.

\textbf{Inductive step:} Assume $C_{t, j} = \max_{i=1}^t C_{i, j-1} P_{i, j}$ holds for $t$. For $t+1$, by the recursive definition (\ref{eq:coverage_recursive}):
$$C_{t+1, j} = \max \{ C_{t, j}, C_{t+1, j-1} P_{t+1, j} \}.$$

Substituting the inductive hypothesis:
$$C_{t+1, j} = \max \{ \max_{i=1}^t C_{i, j-1} P_{i, j}, C_{t+1, j-1} P_{t+1, j} \} = \max_{i=1}^{t+1} C_{i, j-1} P_{i, j}.$$

Thus, the statement holds for $t+1$. By induction, the lemma is proven.
\end{proof}

\begin{corollary}
\label{coro:max_prob}
If at time $t$, $\max_{i=1}^{t} P_{i, j} = P_{t, j} $, then $C_{t, j} = C_{t, j-1}P_{t,j} $.
\end{corollary}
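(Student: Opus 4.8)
The plan is to derive this directly from Lemma~\ref{lemma:induction}, which already gives the closed form $C_{t,j} = \max_{i=1}^{t} C_{i,j-1} P_{i,j}$. Under this lemma, proving the corollary amounts to showing that the maximum over $i \in \{1,\dots,t\}$ is attained at the endpoint $i=t$; that is, $C_{i,j-1}P_{i,j} \le C_{t,j-1}P_{t,j}$ for every $i \le t$. Once that term-by-term domination is established, the maximum collapses to the single term $C_{t,j-1}P_{t,j}$, which is exactly the claimed identity.

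To establish the domination, I would split the product into its two factors and control each separately. The hypothesis directly gives $P_{i,j} \le \max_{i'=1}^{t} P_{i',j} = P_{t,j}$ for all $i \le t$. For the coverage factor, I would first record a monotonicity fact: the sequence $\{C_{t,j-1}\}_t$ is nondecreasing in $t$. This is immediate from the recurrence~(\ref{eq:coverage_recursive}), since $C_{t,j-1} = \max\{C_{t-1,j-1}, C_{t,j-2}P_{t,j-1}\} \ge C_{t-1,j-1}$; iterating gives $C_{i,j-1} \le C_{t,j-1}$ whenever $i \le t$. (This is precisely the ``nondecreasing over time'' property the authors build into the $\max$ of the recursion.)

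With both factors dominated, I would combine them: because $P_{i,j}$, $P_{t,j}$, $C_{i,j-1}$, and $C_{t,j-1}$ are all nonnegative (being probabilities and products of probabilities), multiplying the two inequalities $C_{i,j-1} \le C_{t,j-1}$ and $P_{i,j} \le P_{t,j}$ preserves the order, yielding $C_{i,j-1}P_{i,j} \le C_{t,j-1}P_{t,j}$. Substituting this into the expression from Lemma~\ref{lemma:induction} gives $C_{t,j} = \max_{i=1}^{t} C_{i,j-1}P_{i,j} = C_{t,j-1}P_{t,j}$, completing the proof.

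The only subtle point, and the one I would state explicitly rather than gloss over, is the nonnegativity needed to multiply the two inequalities; without it, the product of a larger $P$ with a smaller $C$ would not be guaranteed to dominate. Since everything here is a probability, this is routine, so I expect no genuine obstacle — the argument is short and mechanical once Lemma~\ref{lemma:induction} and the time-monotonicity of $C$ are in hand.
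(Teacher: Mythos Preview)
Your proposal is correct and follows essentially the same approach as the paper: both invoke Lemma~\ref{lemma:induction}, use the time-monotonicity of $C_{\cdot,j-1}$ from the recurrence~(\ref{eq:coverage_recursive}) together with the hypothesis on $P$, and combine the two factor-wise bounds (via nonnegativity) to collapse the max to the $i=t$ term. Your write-up is simply more explicit than the paper's two-line version, in particular in spelling out the nonnegativity needed to multiply the inequalities.
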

\begin{proof}
By the non-decreasing property of coverage along the learner axis (\ref{eq:coverage_recursive}),
\begin{equation}
\max_{i=1}^{t} C_{i, j-1}  = C_{t, j-1}.
\end{equation}
By lemma \ref{lemma:induction},
\begin{equation}
C_{t, j} = C_{t, j-1} P_{t, j}.
\end{equation}
\end{proof}

\begin{corollary}
\label{coro:coverage}
If two trajectories $\xi^{+}$ and $\xi^{-}$ are identical, except a subgoal $\od_{j}$ is covered at time $t$ in $\xi^{+}$ and is not covered in $\xi^{-}$, then $C^{+}_{t, j} > C^{-}_{t, j}$.
\end{corollary}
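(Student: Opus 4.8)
The plan is to express both coverage values through the closed-form maximum of Lemma~\ref{lemma:induction} and then compare the two trajectories index by index. Write $C^{+}_{t,j}=\max_{i=1}^{t} C^{+}_{i,j-1}P^{+}_{i,j}$ and $C^{-}_{t,j}=\max_{i=1}^{t} C^{-}_{i,j-1}P^{-}_{i,j}$. First I would use the hypothesis that $\xi^{+}$ and $\xi^{-}$ share an identical prefix: for every $i<t$ the observations coincide, so $P^{+}_{i,j}=P^{-}_{i,j}$ and $C^{+}_{i,j-1}=C^{-}_{i,j-1}$, and hence every maximand with index $i<t$ agrees across the two trajectories. Consequently the entire discrepancy between $C^{+}_{t,j}$ and $C^{-}_{t,j}$ is carried by the single term $i=t$.

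Next I would bound that term from each side. For $\xi^{+}$ the index $i=t$ is one admissible maximand, so $C^{+}_{t,j}\ge C^{+}_{t,j-1}P^{+}_{t,j}$ (and in fact Corollary~\ref{coro:max_prob} gives equality once $\od_{j}$ is occupied at time $t$). For $\xi^{-}$, since $\od_{j}$ is never occupied up to time $t$, the value $\max_{i\le t}P^{-}_{i,j}$ is bounded strictly below $P^{+}_{t,j}$; combining the non-decreasing property of coverage along the learner axis from (\ref{eq:coverage_recursive}), which yields $\max_{i\le t}C^{-}_{i,j-1}=C^{-}_{t,j-1}$, with the elementary inequality $\max_i a_i b_i\le(\max_i a_i)(\max_i b_i)$ for nonnegative sequences, gives $C^{-}_{t,j}\le C^{-}_{t,j-1}\max_{i\le t}P^{-}_{i,j}$. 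The shared prefix forces $C^{+}_{t,j-1}=C^{-}_{t,j-1}$, so chaining these bounds yields $C^{+}_{t,j}\ge C^{+}_{t,j-1}P^{+}_{t,j}=C^{-}_{t,j-1}P^{+}_{t,j}>C^{-}_{t,j-1}\max_{i\le t}P^{-}_{i,j}\ge C^{-}_{t,j}$, which is the claimed strict inequality.

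The step I expect to be the main obstacle is the seemingly harmless equality $C^{+}_{t,j-1}=C^{-}_{t,j-1}$. A priori the differing observation at time $t$ changes every entry $P_{t,j'}$ simultaneously, so it could also perturb the coverage of the earlier subgoal $\od_{j-1}$. I would therefore make precise what ``identical except'' means: the two trajectories coincide on all of times $1,\dots,t-1$ and $\od_{j-1}$ is already covered within that common prefix, so that by the $\max$ in (\ref{eq:coverage_recursive}) one has $C_{t,j-1}=C_{t-1,j-1}$ for both trajectories and the single differing step cannot raise it. Pinning down this invariance of the $(j-1)$-coverage, together with the positivity $C^{-}_{t,j-1}>0$ that makes the strict inequality meaningful, is the only delicate part of the argument; everything else is bookkeeping on the maximum formula of Lemma~\ref{lemma:induction}.
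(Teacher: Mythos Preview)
Your proposal is correct and follows essentially the same route as the paper: both arguments invoke Corollary~\ref{coro:max_prob} to pin $C^{+}_{t,j}=C^{+}_{t,j-1}P^{+}_{t,j}$ and then show this single product dominates $C^{-}_{t,j}$. The paper bounds $C^{-}_{t,j}$ term by term via the recursion~(\ref{eq:coverage_recursive}) (showing the product beats both $C^{-}_{t-1,j}$ and $C^{-}_{t,j-1}P^{-}_{t,j}$), whereas you bound it in one shot via Lemma~\ref{lemma:induction} and the product-of-maxima inequality; these are equivalent packagings of the same comparison. Your explicit discussion of why $C^{+}_{t,j-1}=C^{-}_{t,j-1}$ holds---namely that $\od_{j-1}$ is already covered within the shared prefix so the differing step at time $t$ cannot move it---is a point the paper uses but does not spell out, so your treatment is if anything slightly more careful there.
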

\begin{proof}
$\xi^{+}$ achieves coverage of $\od_{j}$ at time $t$, so by corollary \ref{coro:max_prob}:
\begin{equation}
C^{+}_{t, j} = C^{+}_{t, j-1} P^{+}_{t, j} > C^{+}_{t-1, j}.
\end{equation}
Since the trajectories are identical prior to $t$,
we have:
\begin{equation}
C^{+}_{t-1, j} = C^{-}_{t, j-1}.
\end{equation}

Moreover, $P^{+}_{t, j} > P^{-}_{t, j}$ implies
\begin{equation}
C^{+}_{t, j-1} P^{+}_{t, j} > C^{-}_{t, j-1} P^{-}_{t, j}.
\end{equation}

It follows that
\begin{equation}
C^{+}_{t, j-1} P^{+}_{t, j} > \max\{ C^{-}_{t, j-1}, C^{-}_{t, j-1} P^{-}_{t, j} \}. 
\end{equation}

\text{We conclude from the coverage definition (\ref{eq:coverage_recursive})}:
\begin{equation}
C^{+}_{t, j} > C^{-}_{t, j}.
\end{equation}
\end{proof}

\begin{proposition}[\textbf{\orca{} enforces subgoal ordering} (Restating Prop.~\ref{prop:ordering})]
Let $\xi^{-}$ be a trajectory that is out of order; specifically, there exists a subgoal $\od_{j}$ such that $\od_{j}$ is occupied at time $t$ and $\od_{j-1}$ is not yet covered. Let $\xi^{+}$ be a trajectory that is identical to $\xi^{-}$, except that it covers $\od_{j-1}$ before time $t$. Then, $\mathcal{R}_{ORCA}(o^{+}_t,\xi^{+}, \xid) > \mathcal{R}_{ORCA}(o^{-}_t,\xi^{-}, \xid)$.
\end{proposition}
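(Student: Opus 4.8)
The plan is to reduce the reward comparison to a single coverage comparison, and then to show that ``repairing'' the order at subgoal $\od_{j-1}$ strictly raises the coverage of the penultimate subgoal $\od_{\Td-1}$. First I would note that $\xi^{+}$ and $\xi^{-}$ differ only in frames strictly before $t$ (the frames where $\xi^{+}$ occupies $\od_{j-1}$), so they share the observation $o_t$, whence $P^{+}_{t,\Td}=P^{-}_{t,\Td}=\exp(-\lambda d(o_t,\od_{\Td}))>0$. Since the reward is $\mathcal{R}_\text{ORCA}(o_t,\xi,\xid)=C_{t,\Td-1}P_{t,\Td}$ by (\ref{eq:orca_reward}), it suffices to prove the one inequality $C^{+}_{t,\Td-1}>C^{-}_{t,\Td-1}$.

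Next I would establish a strict gain at the repaired subgoal. I would fix the convention that ``$\xi^{+}$ is otherwise identical to $\xi^{-}$ but occupies $\od_{j-1}$'' means the probability matrices satisfy $P^{+}_{s,m}\ge P^{-}_{s,m}$ everywhere, with strict inequality at the frame $t'<t$ where $\xi^{+}$ occupies $\od_{j-1}$. Corollary~\ref{coro:coverage}, applied with subgoal index $j-1$ at time $t'$, then yields $C^{+}_{t',j-1}>C^{-}_{t',j-1}$. Using the non-decreasing property of coverage along the learner axis in (\ref{eq:coverage_recursive}) together with the hypothesis that $\xi^{-}$ never covers $\od_{j-1}$ up to time $t$ (so $C^{-}_{\cdot,j-1}$ stays low while $\xi^{+}$ has already jumped), I would conclude $C^{+}_{t,j-1}>C^{-}_{t,j-1}$.

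Then I would propagate this strict improvement up the subgoal ladder from $j-1$ to $\Td-1$. A weak monotonicity fact is immediate: $P^{+}\ge P^{-}$ entrywise forces $C^{+}\ge C^{-}$ entrywise, by induction on the recursion (\ref{eq:coverage_recursive}), since $\max$ and products of nonnegatives are monotone. To upgrade to strictness at $\Td-1$, I would invoke Lemma~\ref{lemma:induction} in the form $C_{t,k}=\max_{i\le t}C_{i,k-1}P_{i,k}$ and track the specific coverage chain realizing $C^{-}_{t,\Td-1}$: because $\xi^{-}$ occupies $\od_j$ at $t$, there are indices $i_{j-1}\le i_j\le\cdots\le i_{\Td-1}\le t$ achieving its coverage, and $\xi^{+}$ may reuse the same chain but with the strictly larger factor $C^{+}_{i_{j-1},j-1}>C^{-}_{i_{j-1},j-1}$ and equal remaining probabilities, giving a strictly larger product, hence $C^{+}_{t,\Td-1}>C^{-}_{t,\Td-1}$. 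Combined with the reduction, this proves the proposition.

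The main obstacle is precisely this last step. Entrywise weak domination of the coverage matrices is routine, but a strict gain localized at subgoal $j-1$ must survive being multiplied by probabilities $\le 1$ and passed through the repeated $\max$ operators up to $\Td-1$, and it could in principle be ``washed out'' if $\xi^{-}$'s own coverage path overtakes the gain at a later link. The chain-tracking argument above is the cleanest way I see to guarantee survival of the improvement. A secondary subtlety worth stating explicitly is the formal reading of ``occupied'' and ``otherwise identical'': committing to $P^{+}\ge P^{-}$ with a single strict entry is what makes both Corollary~\ref{coro:coverage} and the propagation apply cleanly, so I would record that convention at the outset of the proof.
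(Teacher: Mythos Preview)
Your overall shape—reduce the reward inequality to $C^{+}_{t,\Td-1}>C^{-}_{t,\Td-1}$, establish a strict gain at level $j-1$, then propagate upward—is sound and is more explicit about propagation than the paper. The paper takes a slightly different route: it uses the occupation hypothesis $\max_{i\le t}P^{+}_{i,j}=P^{+}_{t,j}$ together with Corollary~\ref{coro:max_prob} to write $C^{+}_{t,j}=C^{+}_{t,j-1}P^{+}_{t,j}$, and then case-splits on the recurrence (\ref{eq:coverage_recursive}) for $C^{-}_{t,j}$ to establish $C^{+}_{t,j}>C^{-}_{t,j}$ directly at level $j$. Its propagation from level $j$ to $\Td-1$ is then a one-line appeal to ``the trajectories are otherwise identical.'' Your starting point is one level lower and skips the case split, which is acceptable, but note that the paper leans on the occupation of $\od_j$ at time $t$ in a way your argument never really uses.

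There is, however, a genuine gap in your chain-tracking step. You assert $C^{+}_{i_{j-1},j-1}>C^{-}_{i_{j-1},j-1}$ at the bottom index $i_{j-1}$ of the chain realizing $C^{-}_{t,\Td-1}$, but you have only established this strict inequality at times $t'$ and $t$, not at an arbitrary index $\le t$. Nothing in your hypotheses forces the $\xi^{-}$-optimal chain to satisfy $i_{j-1}\ge t'$: if $i_{j-1}<t'$, the two trajectories are identical up to that point, so $C^{+}_{i_{j-1},j-1}=C^{-}_{i_{j-1},j-1}$ and reusing the chain gives only weak inequality. You also cannot simply swap the bottom index to some $i\ge t'$ for $\xi^{+}$, because the monotonicity constraint then forces \emph{all} subsequent chain indices to be $\ge i$, and there is no guarantee the resulting product of probabilities dominates the $\xi^{-}$ value. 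This is exactly the ``washing out'' obstacle you flagged but did not close; resolving it requires either an additional assumption pinning the chain index to $[t',t]$, or a different argument that ties the strict gain specifically to the time-$t$ column (which is closer to what the paper's use of the occupation hypothesis is trying to do).
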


\begin{proof} \label{proof:ordering}
Because $\od_j$ is occupied at time $t$ in $\xi^+$ ($\max_{i=1}^{t} P^+_{i, j} = P^+_{t, j}$),  by corollary \ref{coro:max_prob},
\begin{equation} \label{eq:coverage1}
C^{+}_{t, j} = C^{+}_{t, j-1}P^{+}_{t, j}.
\end{equation}

According to the DP recurrence relation (\ref{eq:coverage_recursive}), there are two cases for $C^{-}_{t, j}$:

\textbf{Case 1.} $C^{-}_{t, j} = C^{-}_{t-1, j}$

By lemma \ref{lemma:induction}, 
\begin{equation}
\label{eq:order1}
C^{-}_{t-1, j} = \max_{i=1}^{t-1} C^{-}_{i, j-1} P^{-}_{i, j}.
\end{equation}
By corollary \ref{coro:coverage}, and the fact that coverage is nondecreasing along the demonstration,
\begin{equation}
\label{eq:order2}
C^{+}_{t, j-1} > C^{-}_{t, j-1} \geq \max_{i=1}^{t} C^{-}_{i, j-1}.
\end{equation}
Because $\xi^{+}$ and $\xi^{-}$ both occupy subgoal $\od_j$ at time $t$:

\begin{equation}
\label{eq:order3}
P^{+}_{t, j} = \max_{i=1}^{t} P^{-}_{i, j}.
\end{equation}

Multiplying (\ref{eq:order2}) and (\ref{eq:order3}) lets us establish a bound on (\ref{eq:order1}):
\begin{equation}
    C^{+}_{t, j-1}P^{+}_{t, j} > \max_{i=1}^{t} C^{-}_{i, j-1} \max_{i=1}^{t} P^{-}_{i, j} \geq
    \max_{i=1}^{t-1} C^{-}_{i, j-1} P^{-}_{i, j}.
\end{equation}

Substituting (\ref{eq:coverage1}), we get:
\begin{equation}
C^{+}_{t, j}  > C^{-}_{t, j}.
\end{equation}

\textbf{Case 2.} $C^{-}_{t, j} =  C^{-}_{t, j-1}P^{-}_{t, j}$

Because $P^{+}_{t, j} = P^{-}_{t, j}$, and by corollary \ref{coro:coverage},
\begin{equation}
C^{+}_{t, j-1} P^{+}_{t, j} > C^{-}_{t, j-1}P^{-}_{t, j}.
\end{equation}

Substituting (\ref{eq:coverage1}), we get:
\begin{equation}
C^{+}_{t, j} > C^{-}_{t, j}.
\end{equation}
Since $C^{+}_{t, j} > C^{-}_{t, j}$ in both cases, and the trajectories are otherwise identical,
\begin{equation}
\mathcal{R}_{ORCA}(o^{+}_t, \xi^{+}, \xid) > \mathcal{R}_{ORCA}(o^{-}_t, \xi^{-}, \xid).
\end{equation}
\end{proof}

\begin{proposition}[\textbf{\orca{} enforces subgoal coverage.} (Restating Prop.~\ref{prop:progress})]
Let $\xi^{-}$ be a trajectory that occupies $\od_{j-1}$ at time $t-1$ and continues to occupy $\od_{j-1}$ at time $t$, instead of progressing towards $\od_{j}$. Let $\xi^{+}$ be an identical trajectory that progresses towards $\od_{j}$ at $t$, and assume that neither trajectory has been closer to $\od_{j}$ before. Then, $\mathcal{R}_{ORCA}(o^{+}_t,\xi^{+}, \xid) > \mathcal{R}_{ORCA}(o^{-}_t,\xi^{-}, \xid)$.
\end{proposition}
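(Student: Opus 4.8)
The plan is to mirror the structure of the ordering argument (Prop.~\ref{prop:ordering}) and reduce the claim to the coverage comparison already packaged in Corollary~\ref{coro:coverage}, then propagate that gain through the subgoal chain into the reward \eqref{eq:orca_reward}. First I would record the consequences of the hypotheses. Since $\xi^{+}$ and $\xi^{-}$ are identical up to time $t-1$, their probability and coverage matrices agree on every entry with learner index $i \le t-1$; in particular $C^{+}_{i,k} = C^{-}_{i,k}$ and $P^{+}_{i,k} = P^{-}_{i,k}$ for all $i < t$ and all subgoals $k$. I read ``progresses towards $\od_{j}$'' as: at time $t$ the learner is strictly closer to $\od_{j}$ in $\xi^{+}$ than in $\xi^{-}$, so $P^{+}_{t,j} > P^{-}_{t,j}$, while the remaining time-$t$ proximities are unchanged, i.e.\ $P^{+}_{t,k} = P^{-}_{t,k}$ for $k \ne j$. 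The assumption that neither trajectory has been closer to $\od_{j}$ before time $t$ gives $\max_{i \le t} P_{i,j} = P_{t,j}$ for both trajectories, which is exactly the hypothesis of Corollary~\ref{coro:max_prob}.

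With this setup, the coverage gain at column $j$ is immediate: the two trajectories are identical except that $\od_{j}$ is (more) occupied at time $t$ in $\xi^{+}$, so Corollary~\ref{coro:coverage} yields $C^{+}_{t,j} > C^{-}_{t,j}$. This corollary already subsumes the case analysis on the recurrence \eqref{eq:coverage_recursive} that would otherwise have to be redone here, so I would cite it rather than re-expand it.

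The remaining step, which I expect to be the crux, is to propagate this strict gain rightward to the penultimate column $\Td-1$ and then fold in the factor $P_{t,\Td}$. The weak direction is clean: by Lemma~\ref{lemma:induction} each $C_{t,k} = \max_{i \le t} C_{i,k-1} P_{i,k}$ is monotone nondecreasing in the entries $C_{\cdot,k-1}$ when the $P$'s are held fixed, and since all $P$ entries agree across the two trajectories for $k \ne j$, an inductive sweep over $k = j+1, \dots, \Td-1$ gives $C^{+}_{t,k} \ge C^{-}_{t,k}$, whence $\mathcal{R}_{\text{ORCA}}(o^{+}_{t}, \xid) = C^{+}_{t,\Td-1} P_{t,\Td} \ge C^{-}_{t,\Td-1} P_{t,\Td} = \mathcal{R}_{\text{ORCA}}(o^{-}_{t}, \xid)$ via \eqref{eq:orca_reward}.

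The difficulty is strictness. Expanding one step of Lemma~\ref{lemma:induction}, $C^{+}_{t,j+1} > C^{-}_{t,j+1}$ holds only if the improved term $C^{+}_{t,j} P_{t,j+1}$ still wins the max, i.e.\ exceeds $\max_{i<t} C_{i,j} P_{i,j+1}$; if $\od_{j+1}$ is far from the learner at time $t$ (so $P_{t,j+1}$ is small) the gain can be masked and the inequality degenerates to equality. I would therefore close strictness by arguing that the maximizing monotone alignment realizing the reward routes through the entry $(t,j)$: because $\od_{j}$ is occupied for the first time at $t$ (Corollary~\ref{coro:max_prob}), any alignment covering $\od_{j}, \dots, \od_{\Td}$ and ending at $(t,\Td)$ must assign $\od_{j}$ to time $t$, so the reward is a product containing precisely the factor that strictly increased. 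The fully rigorous special case is $j = \Td$, where Corollary~\ref{coro:max_prob} collapses the reward to $C_{t,\Td}$ and the conclusion is exactly $C^{+}_{t,\Td} > C^{-}_{t,\Td}$ from Corollary~\ref{coro:coverage}; for intermediate $j$ I would make explicit the frontier condition (that $\od_{j+1}, \dots, \od_{\Td}$ are no better covered at times $<t$ than through the chain passing through $(t,j)$) under which the strict gain survives the downstream maxima, and flag this as the one place the argument needs care beyond a direct appeal to the earlier lemmas.
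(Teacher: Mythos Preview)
Your approach is essentially the paper's. The paper does not cite Corollary~\ref{coro:coverage} but instead re-runs its case analysis inline: it uses Corollary~\ref{coro:max_prob} to get $C^{+}_{t,j} = C^{+}_{t,j-1}P^{+}_{t,j} > C^{+}_{t-1,j}$, then splits on whether $C^{-}_{t,j}$ equals $C^{-}_{t-1,j}$ or $C^{-}_{t,j-1}P^{-}_{t,j}$, and in each case obtains $C^{+}_{t,j} > C^{-}_{t,j}$. Your invocation of Corollary~\ref{coro:coverage} is just a cleaner packaging of that same step.

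Where you go beyond the paper is the propagation of strictness to $C_{t,\Td-1}P_{t,\Td}$. The paper does not engage with this at all: after establishing $C^{+}_{t,j} > C^{-}_{t,j}$, it simply writes ``and the trajectories are otherwise identical'' and concludes the reward inequality---exactly the same closing move it uses in the proof of Proposition~\ref{prop:ordering}. Your worry that the strict gain at column $j$ could be masked by a small $P_{t,j+1}$ in the downstream max is a legitimate technical point, but it is one the paper leaves implicit. So you are not missing anything the paper supplies; you are being more scrupulous than the reference proof, and for purposes of matching the paper you can stop at $C^{+}_{t,j} > C^{-}_{t,j}$ and invoke the same ``otherwise identical'' closing line.
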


\begin{proof}
\label{proof:progress}
At time $t$, $\xi^{+}$ moves closer to $\od_{j}$ than it has previously been. Thus, by corollary \ref{coro:max_prob}:
\begin{equation} \label{eq:coverage2}
C^{+}_{t, j} = C^{+}_{t, j-1} P^{+}_{t, j} > C^{+}_{t-1, j}.
\end{equation}

There are two cases for $C^{-}_{t, j}$.

\textbf{Case 1.} $C^{-}_{t, j} = C^{-}_{t-1, j}$

Because $\xi^{+}$ is identical to $\xi^{-}$ prior to $t$,
\begin{equation}C^{+}_{t-1, j} = C^{-}_{t-1, j}\end{equation}

Thus, by (\ref{eq:coverage2}),
\begin{equation}
C^{+}_{t, j}  > C^{+}_{t-1, j}  = C^{-}_{t-1, j} = C^{-}_{t, j}.
\end{equation}

\textbf{Case 2.} $C^{-}_{t, j} =  C^{-}_{t, j-1}P^{-}_{t, j}$

Since $\xi^{+}$ is identical to $\xi^{-}$ prior to $t$, and at time $t$ neither improves its coverage of subgoals prior to $\od_{j}$:
\begin{equation}
C^{+}_{t, j-1} = C^{-}_{t, j-1}.
\end{equation}
However, because $\xi^{+}$ moves closer to $\od_{j}$ than $\xi^{-}$ at time t,
\begin{equation}
P^{+}_{t, j} > P^{-}_{t, j}.
\end{equation}
We conclude that
\begin{equation}
C^{+}_{t, j} = C^{+}_{t, j-1}P^{+}_{t, j} > C^{-}_{t, j-1}P^{-}_{t, j} = C^{-}_{t, j}.
\end{equation}

Since $C^{+}_{t, j} > C^{-}_{t, j}$ in both cases, and the trajectories are otherwise identical,
\begin{equation}
\mathcal{R}_{ORCA}(o^{+}_t,\xi^{+}, \xid) > \mathcal{R}_{ORCA}(o^{-}_t,\xi^{-}, \xid).
\end{equation}
\end{proof}

\subsection{Toy Examples of \orca{} Overcoming Failure Cases of Existing Approaches\label{app:toy_orca_success}}
We present complete figures showing how \orca{} overcomes OT's failure to enforce subgoal ordering and DTW/TemporalOT's failure to enforce full subgoal coverage. The distance function between each state is the Manhattan distance, which is Markovian.  

\begin{figure}
    \centering
    \includegraphics[width=\linewidth]{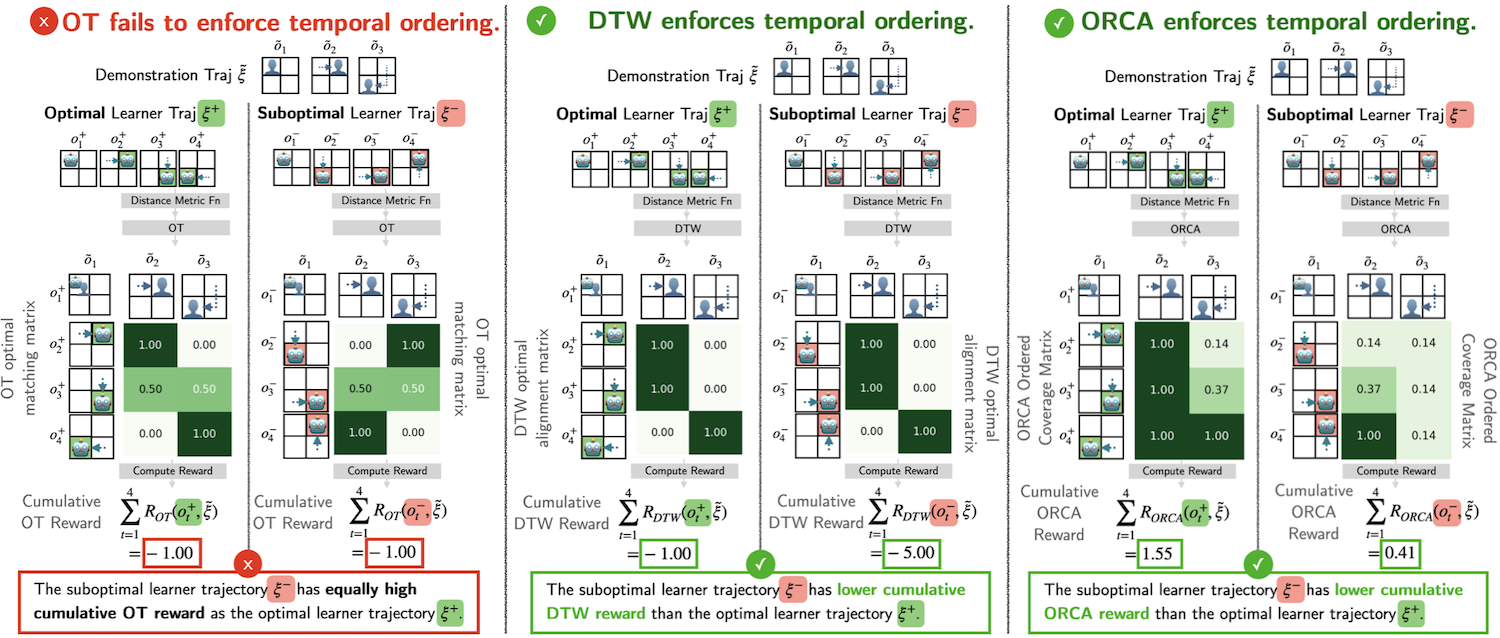}
        \caption{\small \textbf{Failure cases for OT reward in the 2D-Navigation environment. Both DTW and \orca{} overcomes OT's limitation.} }
    \label{fig:ot_fail_full}
\end{figure}

In Fig~\ref{fig:ot_fail_full}, the suboptimal learner trajectory completes the demonstration subgoals in the wrong order compared to the optimal learner trajectory that completes the task in the correct order. Because OT treats the learner and trajectory distribution as two unordered sets, and the distance function does not encode any temporal information, it fails to penalize the suboptimal trajectory, giving both equally high rewards. In contrast, because DTW enforces temporal constraint in its alignment, the final alignment is the same for both trajectories, resulting in a lower DTW reward for the suboptimal trajectory. Similarly, \orca{} measures the probability that \emph{all subgoals are covered in the correct order}. Although the suboptimal learner trajectory perfectly occupies the third subgoal at time 3, because it has not occupied the second subgoal, its overall ordered coverage is still low, thereby penalizing the suboptimal learner trajectory for covering out of order.

\begin{figure}
    \centering
    \includegraphics[width=\linewidth]{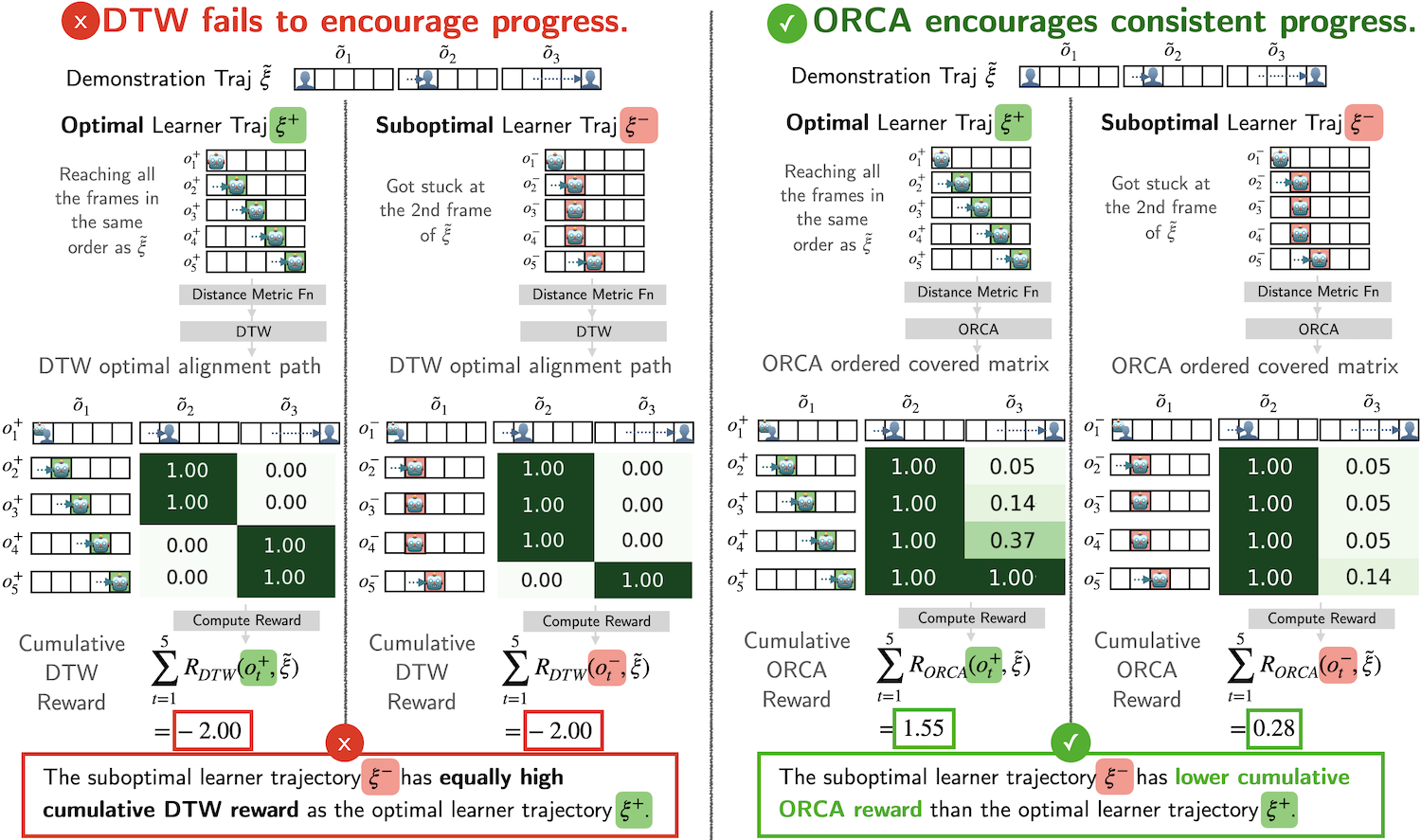}
    \caption{\small \textbf{Failure cases for DTW reward in the 2D-Navigation environment. \orca{} overcomes DTW's limitation.} The suboptimal learner trajectory $\xi^-$ gets stuck at the second frame of the demonstration, while the optimal one $\xi^+$ makes consistent progress.}
    \label{fig:dtw_fail_full}
\end{figure}
In Fig.~\ref{fig:dtw_fail_full}, the suboptimal learner trajectory stalls at the second subgoal while the optimal trajectory makes consistent progress towards covering all subgoals. DTW fails because it does not constrain the number of learner frames that can be matched with each subgoal. Consequently, its alignment matrix matches most learner frames to the second subgoal, which has no cost since they perfectly occupy it, resulting in equally high DTW rewards for both trajectories. In contrast, because \orca{} rewards depends on \emph{all} subgoals to be covered, the suboptimal trajectory receives low rewards for most timestep since it has not covered the final subgoal.

\begin{figure}
    \centering
    \includegraphics[width=\linewidth]{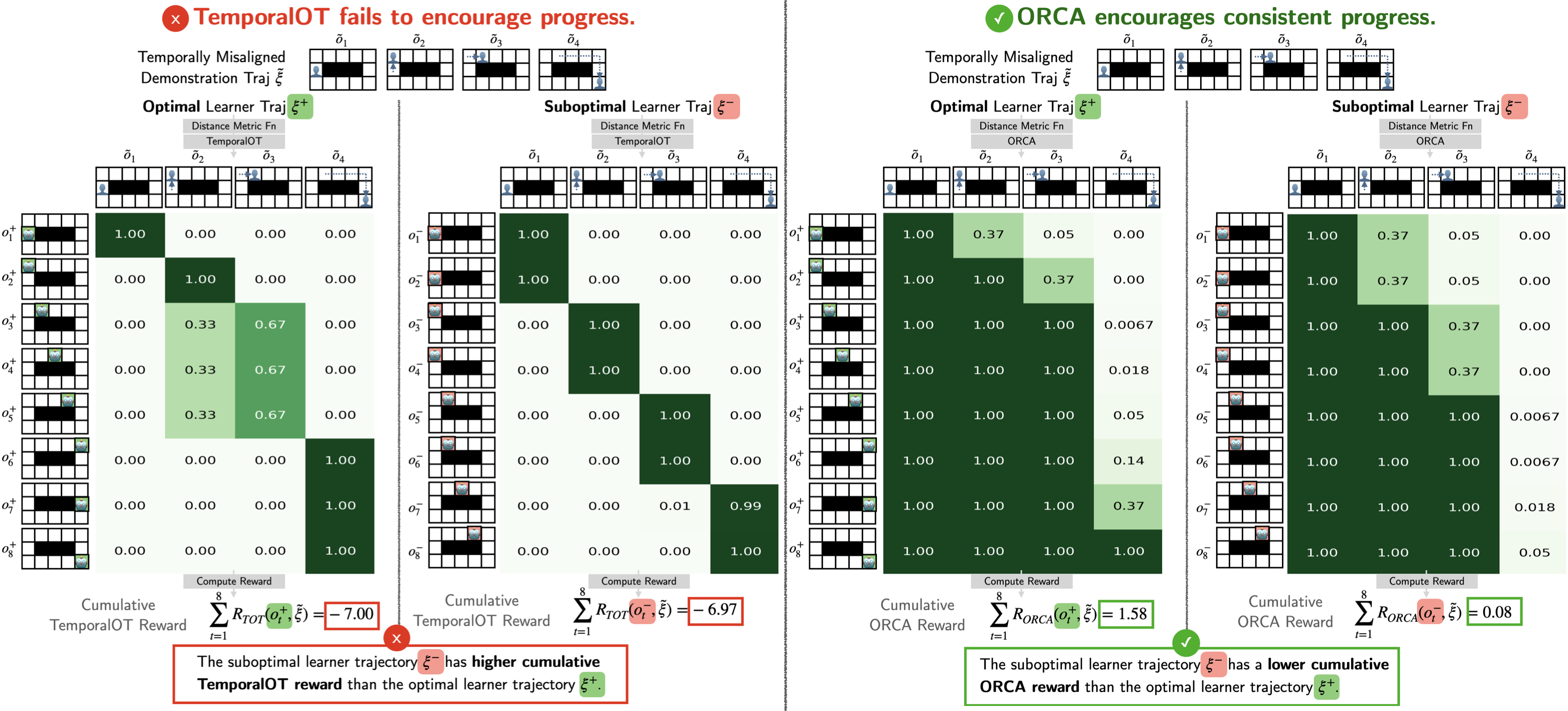}
    \caption{\small \textbf{Failure cases for TemporalOT reward given a temporally misaligned demonstration in the 2D-Navigation environment. \orca{} overcomes TemporalOT's limitation.} the video demonstration is \emph{temporally misaligned} because the agent can move one cell at a time, and the subgoals would require the agent to take multiple timesteps to reach. The suboptimal learner trajectory $\xi^-$ that moves slowly and fails to complete the task. In contrast, the optimal learner trajectory $\xi^+$ makes steady progress and completes the task successfully.}
    \label{fig:tot_fail_full}
\end{figure}

In Fig.~\ref{fig:tot_fail_full}, the video demonstration is temporally misaligned because the agent can move one cell at a time, and the subgoals would require the agent to take multiple timesteps to reach. The suboptimal learner trajectory is slow, spending 2 timesteps at earlier subgoals and failing to solve the tasks in time. Meanwhile, the optimal learner trajectory makes consistent progress and succeeds. Because TemporalOT assumes that the learner and demonstration demonstrations are temporally aligned, the mask window that it then defines causes the coupling matrix to also approximate a diagonal matrix. Such coupling matrix would encourage the agent to spend equal amount of time matching each subgoal, thereby rewarding the suboptimal trajectory that does so perfectly even though they did not finish the task. In contrast, \orca{}'s rewards depend on \emph{all} subgoals to be covered, so the suboptimal trajectory receives low rewards for all timesteps since it has not covered the final subgoal.

\section{Environment Details\label{app:env_details}}

We run experiments in two different environments: Meta-world~\cite{yu2021metaworldbenchmarkevaluationmultitask}, a manipulation environment, and \texttt{Humanoid-v4}~\citep{mujoco}, a more difficult control environment. Following prior work \cite{fu2024robot}, we test 10 different tasks in Meta-world. To show that \orca{} works in more general domains, we additionally design 4 tasks in the humanoid environment that require moving the arms of the humanoid.

\subsection{Visual Encoder}
\orca{}, as well as all baselines except \roboclip{}, requires a distance function defined on the space of images. We use a visual encoder to obtain embeddings for each image in the learner and expert trajectory, and find the pairwise distances between them to obtain the distance matrix.

In Meta-world, we follow prior work \cite{fu2024robot} in using an off-the-shelf Resnet50~\cite{he2015deepresiduallearningimage} as the visual encoder. We use the cosine distance between embeddings to produce a cost matrix. 

In Humanoid, we find that off-the-shelf visual encoders do not capture the fine-grained details necessary for the more difficult control task. Instead, we train a model to predict the joint positions of the humanoid, and use the Euclidean distance between joint predictions as the distance function. To address out-of-distribution samples, we train a separate network to predict the model confidence, which is used to scale the final rewards. For more details on the joint predictor, see \ref{subsubsec:mujoco_visual_encoder}.

\subsection{Meta-world}
\subsubsection{Tasks}
We run experiments on 10 different tasks in the \texttt{Meta-world} \cite{yu2021metaworldbenchmarkevaluationmultitask} environment. In addition to the 9 tasks in~\citet{fu2024robot}, we added \textit{Door-close}. We classify them into easy, medium, and hard based on factors like their visual difficulty, necessity for precise motor control, and interaction with other objects. For further information on the tasks, we refer the reader to \cite{fu2024robot}. Below is a brief description of the objective for each task:
\begin{table}[h]
\centering
\begin{tabular}{lll}
\toprule
\textbf{Task} & \textbf{Difficulty} & \textbf{Success Criteria} \\
\midrule
Button-press & Easy & The button is pushed. \\
Door-close & Easy & The door is fully closed. \\
Door-open & Medium & The door is fully open. \\
Window-open & Medium & The window is slid fully open. \\
Lever-pull & Medium & The lever is pulled up. \\
Hand-insert & Medium & The brown block is inserted into the hole in the table. \\
Push & Medium & The cylinder is moved to the target location. \\
Basketball & Hard & The basketball is in the hoop. \\
Stick-push & Hard & The bottle is pushed to the target location using the stick. \\
Door-lock & Hard & The locking mechanism is engaged (pushed down). \\
\bottomrule
\end{tabular}
\end{table}

\subsubsection{Training Details}

In order to improve performance across all methods, we employ two training strategies on top of the reward model and RL algorithm:

1) \textbf{Context embedding}: We use the context embedding-based cost matrix proposed in \cite{fu2024robot}, which can be interpreted as a diagonal smoothing kernel. Specifically, the distance between two frames is expressed as the average distance over the next $c_w$ learner and demonstration frames (where $c_w$ refers to the context window):
$$d_{window}(o_i^L, o_j^D) = \frac{1}{c_w} \sum_{k=1}^{c_w} d(o_{i+k}^L, o_{j+k}^D) $$
We choose a context window of length 3, which resulted in the best performance in \cite{fu2024robot}. Although a longer context window could damage performance on extremely mismatched tasks, we find that a small window helps regularize the noisiness of the visual distance metric.

2) \textbf{Timestep in agent state}: By nature of the sequence-following task, the reward at a given time step depends on the states visited by the learner in previous time steps. Thus, if the policy or value estimator cannot observe the entire trajectory, then it does not have enough information to model the reward. In practice, we find that including the current time step (as a percentage of the episode length) in the state observation of the agent allows it to reasonably estimate the value function. We emphasize that, although the agent has access to the ground truth state and time step, \textit{the reward model only sees the visual learner rollout and demonstration.} Because the purpose of this paper is to examine specific sequence-matching reward functions, we choose to use this empirical trick for our experiments, leaving further investigations into RL algorithms given non-Markovian rewards as future work.

\subsection{Humanoid}
\subsubsection{Tasks}

We use the MuJoCo \texttt{Humanoid-v4} environment~\citep{mujoco,humanoidgym}. In order to improve visual encoder performance, we modify the environment textures and camera angle, as described in \cite{rocamonde2024visionlanguage}. At the beginning of an episode, the humanoid is spawned upright, slightly above the ground, with its arms curled towards its chest. 

\begin{figure} 
    \label{fig:mujoco_demos}
    \centering    
    \includegraphics[width=0.95\linewidth]{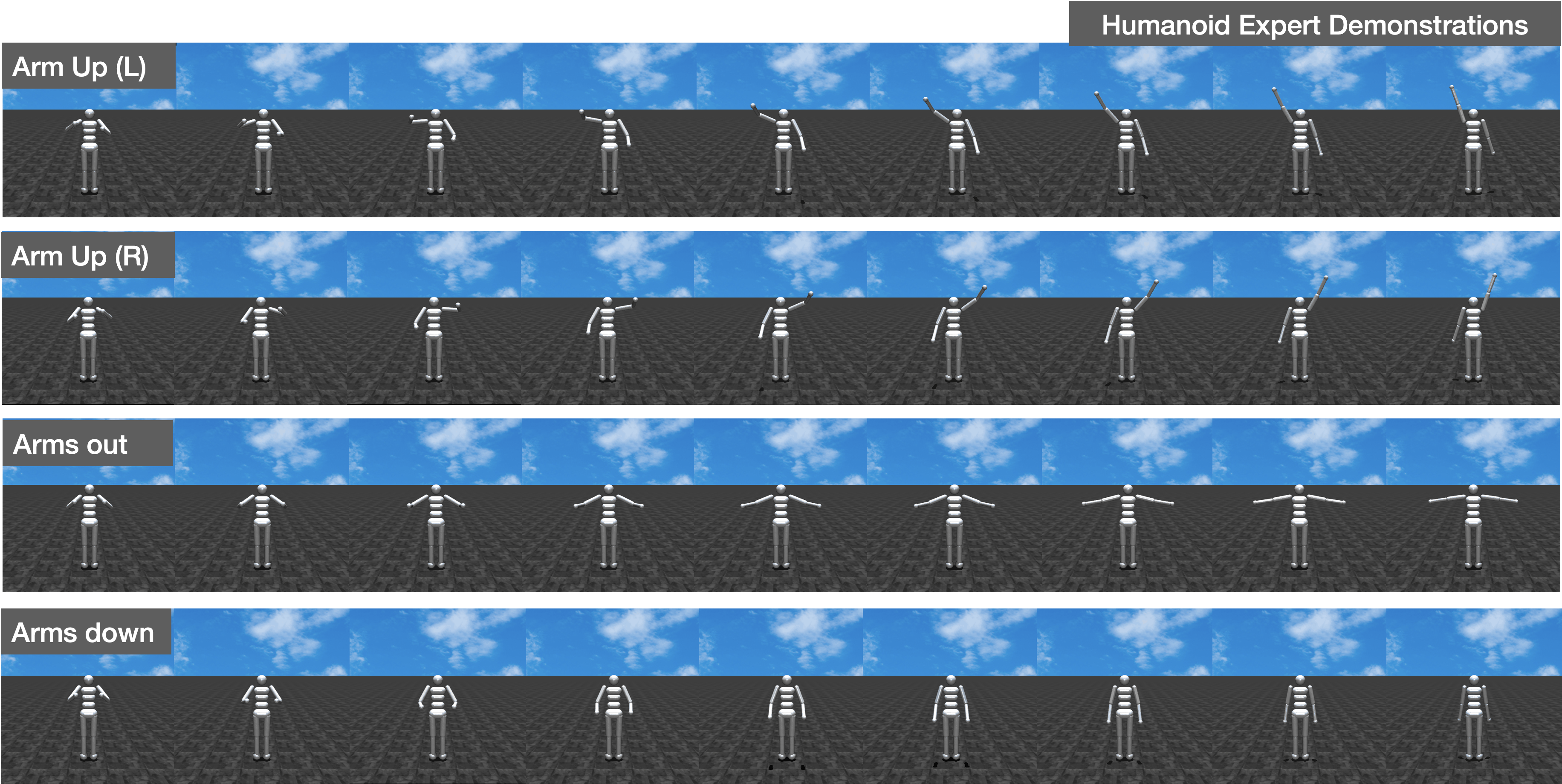}
    \caption{\small \textbf{Visual Demonstrations of the tasks in the Humanoid Environment.} These were generated by selecting the target final joint state, interpolating from the start joint state, and rendering the intermediate frames.}
    \label{fig:all_mujoco_demos}
\end{figure}
The humanoid’s goal is to follow the motion of a demonstration trajectory within a maximum of 120 timesteps. We define 4 motions, corresponding to 4 demonstration trajectories. The humanoid must remain standing while doing all tasks. The visual demonstrations for each task are shown in \ref{fig:mujoco_demos}.
\begin{table}[h]
\centering
\begin{tabular}{ll}
\toprule
\textbf{Task} & \textbf{Success Criteria} \\
\midrule
Arm up (L) & The left arm is raised above the head and the right arm is down. \\
Arm up (R) & The right arm is raised above the head and the left arm is down. \\
Arms out & Both arms are raised to shoulder height. (T-pose) \\
Arms down & Both arms are lowered to the side. \\
\bottomrule
\end{tabular}
\end{table}
These demonstration trajectories each have a length of 10 and are generated by interpolating between the initial and final poses.
Fig.~\ref{fig:all_mujoco_demos} shows snapshots of these trajectories. 

\begin{figure}[ht]
    \centering
    \includegraphics[width=\linewidth]{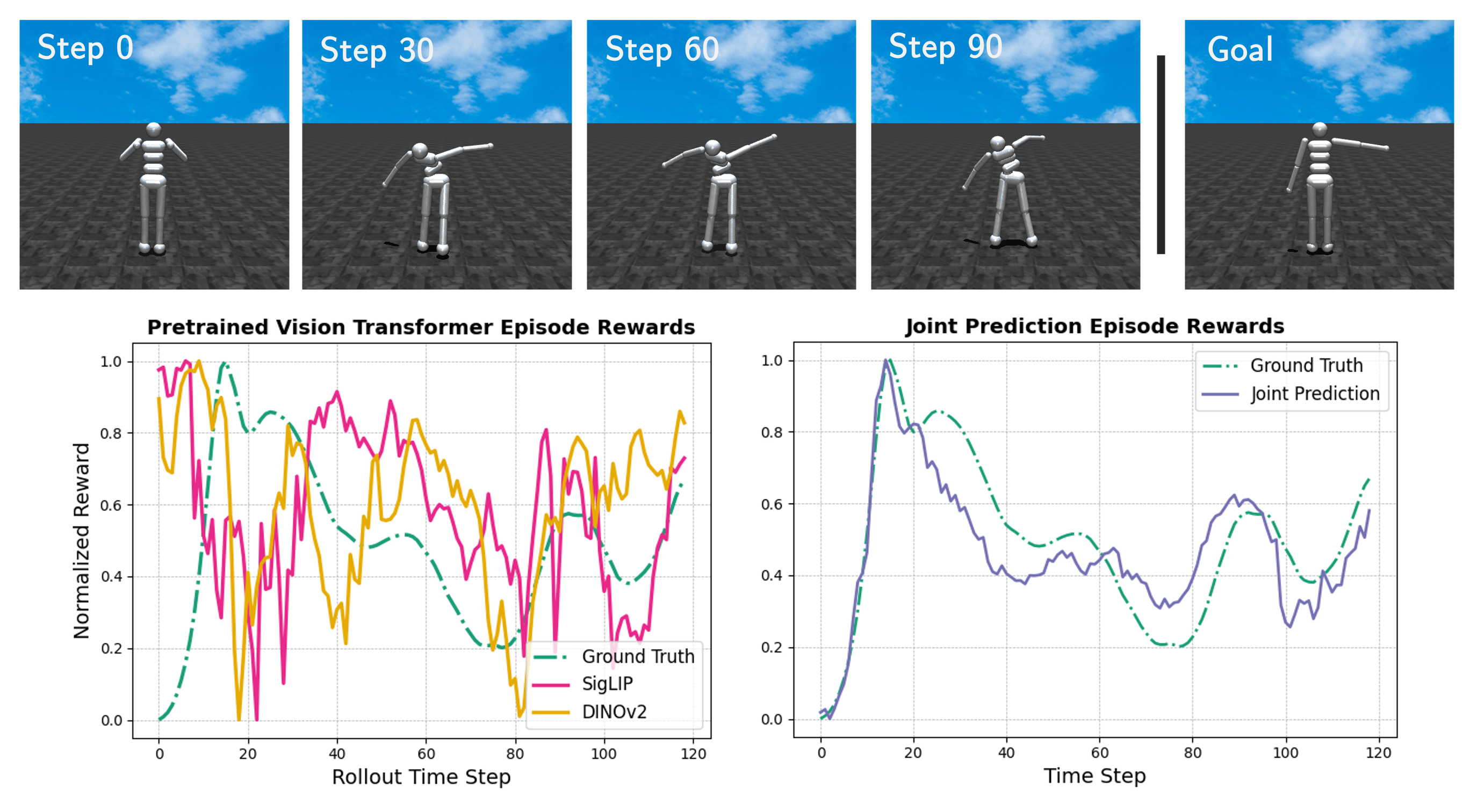}
    \caption{\small \textbf{Goal-reaching rewards of (left) two pretrained models and (right) our joint prediction model on an example learner trajectory, with the goal of raising the left arm to the side}. To emphasize their shape, all rewards are normalized along the trajectory dimension. Rewards using the pretrained SigLIP-ViT-B-16 \cite{zhai2023sigmoid} and DINOv2-ViT-B-14-reg \cite{oquab2023dinov2} models are calculated as the cosine similarity between the learner and demonstration embeddings. The fine-tuned joint prediction model provides a smoother reward curve. The trajectories are in the MuJoCo \texttt{Humanoid-v4} environment~\citep{mujoco,humanoidgym}, which is visually modified to mimic the setup of~\cite{rocamonde2024visionlanguage}.}
    \label{fig:vit_rewards}
\end{figure}

\subsubsection{Confidence-Scaled Visual Rewards.}
\label{subsubsec:mujoco_visual_encoder}
Empirically, we found that off-the-shelf visual encoders produced noisy rewards in the Mujoco environment, as shown in Fig.~\ref{fig:vit_rewards}. This resulted in training failure regardless of the distribution-matching or sequence-matching function. To solve this problem, we train a network that predicts the joint positions of the humanoid given an image observation. To address distribution shift during RL training, we use an autoencoder to predict a model confidence score, which we use to scale the final rewards. We additionally assume access to a stability reward function, which includes a control cost and a reward for remaining standing:
\begin{equation}
\mathcal{R}_{stability} = \exp(- (h_{torso} - 1.3)^2)  - c_{ctrl}
\end{equation}
where $h_{torso}$ is the height of the humanoid torso, and $c_{ctrl}$ is a control cost provided by the environment.

Given a learner observation $o^L_t$, demonstration subgoal $o^D_j$, visual backbone $\phi$, and joint predictor $f$, we let 
\begin{equation}d(o^L_t, o^D_j) = ||f(\phi(o^L_t)) - f(\phi(o^D_j)) ||_2\end{equation}
This distance metric is used by \orca{} to obtain $\mathcal{R}_{orca}$. Then, we compute the confidence of the learner observation embedding $c(\phi(o^L_t))$ according to \ref{eq:conf_scaling}. This results in the final reward function:
\begin{equation}
\mathcal{R} = c(\phi(o^L_t)) * \mathcal{R}_{orca}(o_t^L, \xi^D) + \lambda \mathcal{R}_{stability}
\end{equation}

\subsubsection{Joint Predictor Training Details}

\textbf{Dataset}: We collect a dataset $\mathcal{D}=\{(o_i, j_i)\}_{i=1}^N$ containing MuJoCo images $o_i$ of various poses and corresponding joint positions $j_i$. The dataset includes in total 9{,}038 samples. To build the dataset, we utilize a set of rollout trajectories covering a set of goal reaching tasks (such as different hand poses, doing splits, etc.), We include both successful and unsuccessful trajectories.
To ensure diversity among samples representing different stages of a trajectory, we select one frame every $k$ frames (here $k = 5$), encouraging the network to differentiate between similar images.
Given the similarity of initial trajectories, we retain the first four frames only 25\% of the time, and in those cases, select a random frame from a five-frame interval.

\textbf{Training}: To train our joint predictor $f \circ \phi$, we fully fine-tune a ResNet50 backbone \cite{he2015deepresiduallearningimage} pre-trained on ImageNet-1K \cite{deng2009imagenet} with a 3-layer MLP head that projects to the joint dimension. 
The MLP head has layers of shape (2048, 1024), (1024, 1024), and (1024, 54), where 54 represents the number of joints (18) multiplied by the dimension per joint (3). Optimization is performed over 100 epochs using SGD with learning rate .008, batch size 16, and momentum 0.875. After training the joint predictor, we freeze the backbone weights, and train a shallow autoencoder architecture with two linear layers of shapes $(d_\phi, 32)$ and $(32, d_\phi)$ using the same parameters, where $d_\phi$ is the dimension of the backbone (2048 in this case). This provides the reconstruction loss that is used for confidence estimation, as described in \ref{subsubsec:conf_scaling}

\begin{figure}[ht] 
    \centering
    \includegraphics[width=\linewidth]{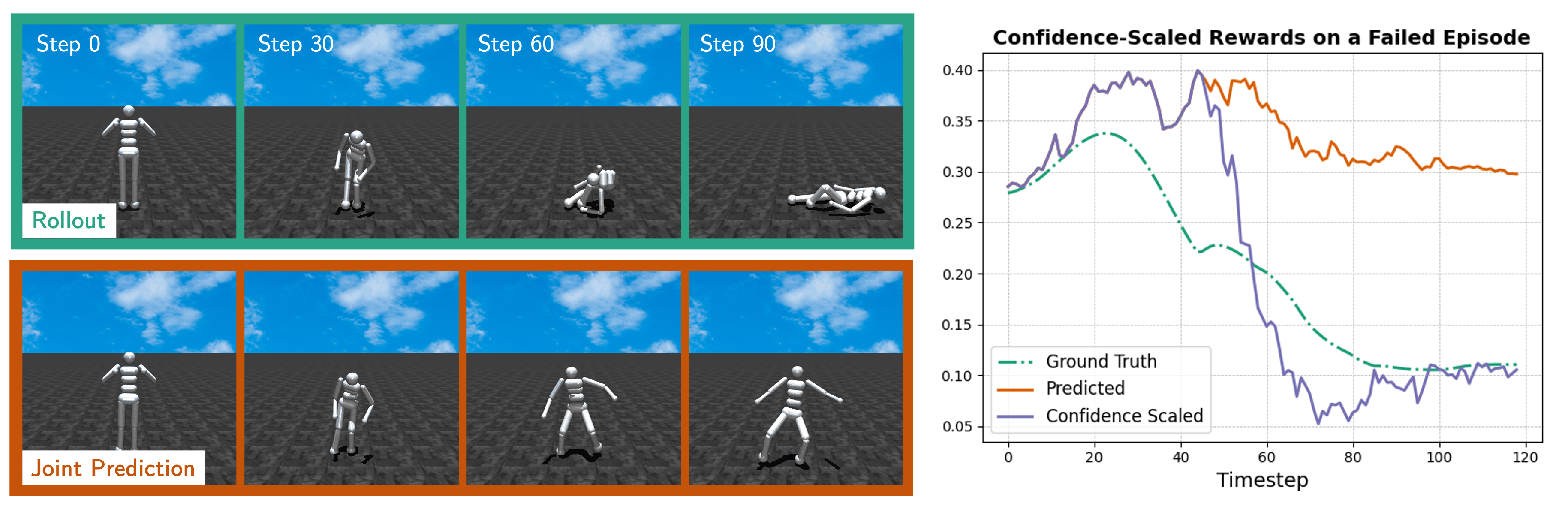}
    \caption{\textbf{The impact of confidence scaling on rewards for a trajectory where the robot falls down}. The ground truth trajectory is shown on top and renders of the model's joint predictions given the top frames are shown on the bottom. The rewards are computed with respect to the final reference image of the left arm wave task. Once the robot starts falling, there is a distribution shift in the images, and the model predicts that the robot is upright. This leads to high reward predictions, but also high uncertainty estimates. When corrected using the confidence function, the reward shape improves significantly.}
    \label{fig:conf_scaling}
\end{figure}

\subsubsection{Confidence Estimate as Reward Scaling}
\label{subsubsec:conf_scaling}
The use of a joint-position predictor results in an additional challenge: in an environment with unstable dynamics, there is a large space of image observations with very different joint positions, many of which are difficult to reach through robot play. During RL training, a policy can reach a state outside of $\mathcal{D}$, resulting in noisy joint predictions and rewards. To solve this problem, prior work has estimated the epistemic uncertainty of the visual model using an autoencoder architecture \citep{andrews2016autoencoder, frey2023fasttraversabilityestimationwild}. Given that the training converges, if an embedding $\mathbf{z}$ is in the domain, then the autoencoder will be able to achieve low reconstruction loss. Contrapositively, if it has high reconstruction loss on an embedding \textbf{z'}, then \textbf{z'} must not be in domain. 

First, we train an autoencoder to reconstruct image embeddings on the offline dataset. After training coverages, we compute the final mean $\mu_{reco}$ and standard deviation $\sigma_{reco}$ of the loss on this dataset. Then, we use a formulation inspired by \citet{frey2023fasttraversabilityestimationwild} to compute a confidence score $u(\mathbf{z_i})$:
\begin{equation}
\label{eq:conf_scaling}
c(\mathbf{z_i}) = 
\begin{cases} 
1, & \text{if} \mathcal{L}_{reco}(\mathbf{z_i}) < \mu_{reco}
\\
 \exp{\left(-\frac{(\mathcal{L}_{reco}(\mathbf{z_i}) - \mu_{reco})^2}{2(\sigma_{reco}k_\sigma)^2}\right)}, & \text{otherwise}
\end{cases}
\end{equation} 
where $k_\sigma$ is a hyperparameter that controls the spread of the confidence function. In our experiments, we set $k_\sigma=2$.

We observe that the reconstruction losses over a set of trajectories sampled from partially trained policies are skewed towards low confidence. In-domain images tend to have low reconstruction loss that is tightly clustered around the average offline loss, while out-of-domain images tend to have higher and more spread out loss. Figure \ref{fig:conf_scaling} shows a qualitative example of how uncertainty scaling fixes incorrect reward predictions on out-of-distribution images. The shape of the uncertainty scaled reward curve better matches the ground truth.



\subsection{RL Policy.\label{app:rl}}

 


In the Meta-world environment, we use an identical setup and hyperparameters to~\citet{fu2024robot}, training the policy with DrQ-v2 over 1 million steps. We also follow~\citet{fu2024robot} to repeat a policy's predicted action 2 times, which effectively shortens the episode length by half. This effect is applied both in RL training and generating demonstrations from hand-engineered policies. In MuJoCo, the policy is trained with SAC over 2 million steps, and we slightly modify the parameters from \citet{rocamonde2024visionlanguage} to adapt to the shorter training period (2 million steps vs 10 million steps). All policies are trained with ground truth state observations, but we emphasize that the reward function only sees visual observations.

\begin{table}[h]
\centering
\caption{Training hyperparameters used for experiments on both environments.}\label{tab_hyperp}
\begin{tabular}{lll}
\toprule
\textbf{Parameter} & \textbf{Meta-world (DrQ-v2)} & \textbf{Humanoid (SAC)} \\
\midrule
Total environment steps   & 1,000,000            & 2,000,000      \\
Learning rate             & 1e-4   & 1e-3 \\
Batch size                & 512                 & 256 \\
Gamma ($\gamma$)          & 0.9                & 0.99           \\
Learning starts           & 500                & 6000          \\
Soft update coefficient   & 5e-3  & 5e-3 \\
Actor/Critic architecture & (256, 256)         & (256, 256)    \\
Episode length            & 125 or 175         & 120            \\
\bottomrule
\end{tabular}
\end{table}

\subsection{Approach and Baselines.\label{app:baselines}}
We describe the hyperparameter details for each baseline as needed. 
\begin{itemize}
    \item \orca{}. We use $\lambda=1$ for temperature turn. For Meta-world tasks, we initialize the policy by loading \tot{}'s checkpoint at 500k steps before training on ORCA rewards. For Humanoid tasks, we load the checkpoint at 1M steps because the total training steps is 2M.
    \item \ot{}. We solve the entropic regularized optimal transport problem shown in (\ref{eq:ot_mu_star}). In Meta-world, the weight on the entropic regularization term is $\epsilon=1$, and in Humanoid, it is $\epsilon=0$.
    \item \tot{}. TemporalOT \cite{fu2024robot} was originally designed for learner and reference sequences of the same length. In this paper, we compare against a slightly stronger version of TemporalOT, which allows for a learner that is linear in the speed of the expert (either faster or slower). Specifically, we mask with a windowed diagonal matrix stretched along the longer of the learner and demonstration axis. Because TemporalOT also formulates the entropic regularized OT problem (\ref{eq:tot_mu_star}), the entropic weight is also $\epsilon=1$. We use a varying mask window size, depending on the reference length. \citet{fu2024robot} used $k_m=10$ for metaworld matched demos. We use the same for matched demos, and for mismatched demos, we let $k_m \approx \lceil\frac{|\tilde{\xi}|}{10}\rceil$.
    \item \threshold{}. This is a hand-engineered reward function with simple conditionals. \threshold{} contains two terms: the number of subgoals completed and the reward with respect to the current subgoal to follow. It initializes the subgoal to follow as the first subgoal, and it starts tracking the next subgoal, when the current subgoal's reward is above a predefined threshold. We set the threshold as $0.90$ for all Meta-world tasks and $0.70$ for all Humanoid tasks because it is a more challenging environment where it is difficult to achieve high rewards.
    \item \roboclip{}. \roboclip{}~\citep{sontakke2024roboclip} uses a pretrained video-and-language model \citep{xie2018rethinking} to directly encode the video. For videos with more than 32 frames, \roboclip{} downsample them to 32 frames before passing them to the encoder. 
    It defines the reward for the last timestep as the cosine similarity between the learner video's and the demonstration video's embeddings, while all previous timesteps have zero as the reward. Due to Python version issues, we train all \roboclip{} policies using SAC~\cite{haarnoja2018softactorcriticoffpolicymaximum}, following the setup and code base from \citet{sontakke2024roboclip}.
\end{itemize}

\subsection{Varying Temporal Alignment Level Experiment Details.\label{app:random_mismatch_setup}}
We study the effect of how increasingly misaligned demonstrations would affect \orca{} and \tot{}'s performances.
We identify two types of temporal misalignment: either the demonstration contains pauses and is slower, or the demonstration accelerates through a segment and is faster.
We randomly perturb the original demonstrations of three Meta-world tasks (\textit{Door-open}, \textit{Window-open}, \textit{Lever-pull}), generating 6 demonstrations per task per speed type (i.e, slow or fast).

Concretely, we first evenly split the original demonstration into five segments.
To get the first three randomly perturbed demonstrations, we randomly select \emph{one} segment and randomly change their speed.
We can speed up a segment by 2, 4, 6, 8, or 10 times, and we can slow down a segment by 2, 3, 4, 5, or 6 times. 
To get the rest of the randomly perturbed demonstrations, we randomly select \emph{three} segments and randomly change their speed.
Then, for each task and speed type, we categorize 3 of the perturbed demonstrations as having ``Low" level of misalignment and the other 3 as having ``High" level of misalignment by ranking the demonstrations based on the mean absolute deviation between the segment length.
The more varied the segment lengths are with respect to each other, the more temporally misaligned this demonstration is. 
Each perturbed demonstration is trained on a random seed, and we ensure that \orca{} and \tot{} are trained on that same seed for fair comparison.

\section{Additional Experiment Results}
\label{app:additional_experiments}
\textbf{Temporally Misaligned Experiment.}

\begin{figure*}[h]
    \centering
    \includegraphics[width=\linewidth]{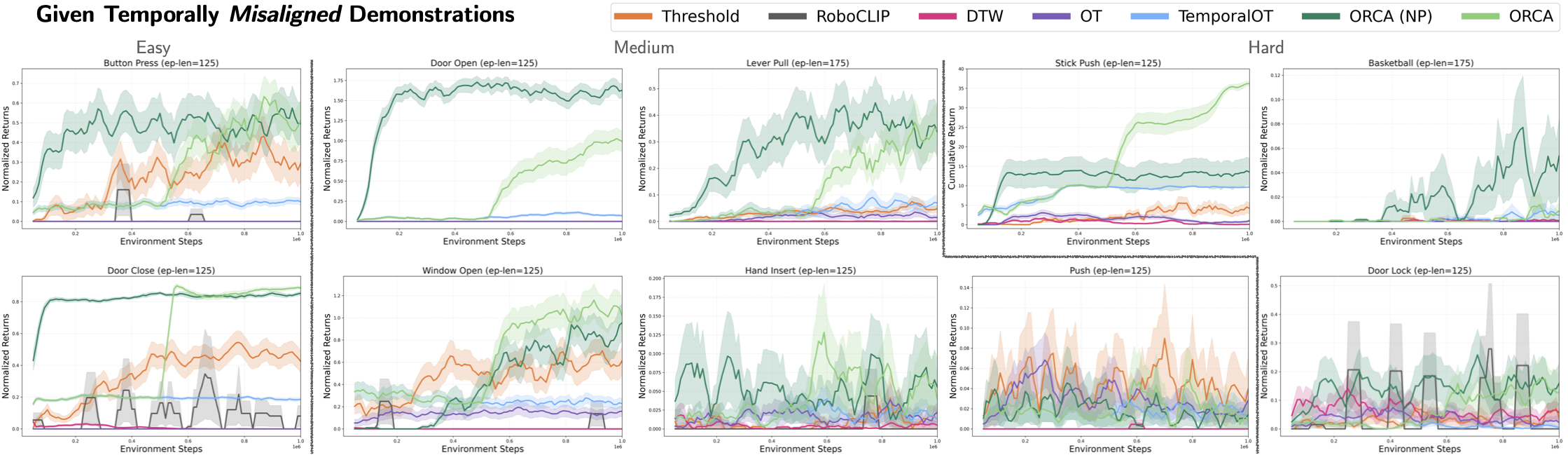}
    \caption{\small \textbf{Training Curves for All Meta-world tasks Given Temporally \emph{Misaligned} Demonstrations} We compute the mean and standard error across the 3 training runs, each evaluated on 10 random seeds.
    }
    \label{fig:meta_all_training}
\end{figure*}

Fig. \ref{fig:meta_all_training} shows training curves of all methods given temporally misaligned demonstrations. The \orca{} plot diverges directly from \tot{} at 500k steps because it is initialized with these \tot{} policies, and trained for an additional 500k stops. In tasks where \tot{} achieves some success by this point steps, \orca{} performs at least as well as \orcanp{}, because it can take advantage of the better initialization. In tasks like basketball and door-open, where \tot{} is unsuccessful after 500k steps, \orcanp{} performs better because it is trained with the \orca{} reward for more steps. All methods fail to learn the push task, which we hypothesize is due to the visual encoder.

\begin{figure*}[h]
    \centering
    \includegraphics[width=\linewidth]{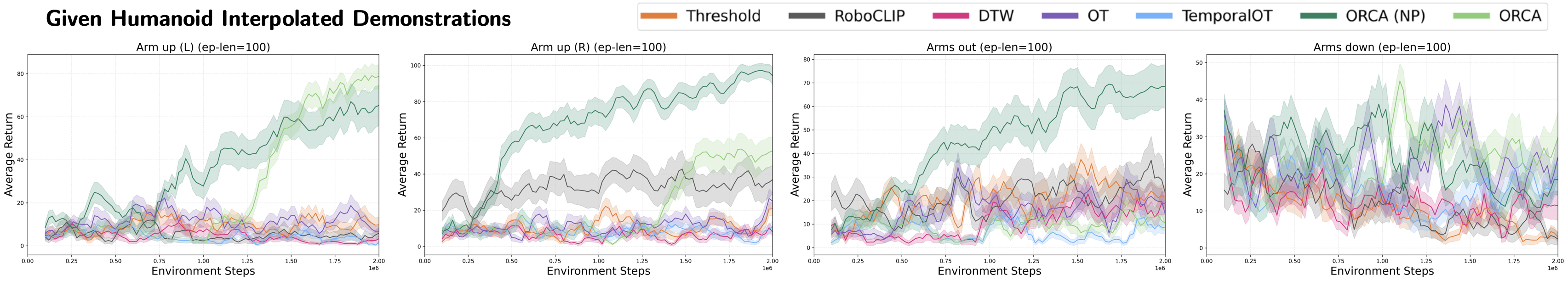}
    \caption{\small \textbf{Training curves for all humanoid tasks.} We compute the mean and standard error across the 3 training runs, each evaluated on 8 random seeds. Because \tot{} achieves poor performance, \orca{} does not benefit from pretraining, and \orcanp{} is the most successful.
    }
    \label{fig:humanoid_all_training}
\end{figure*}

\begin{table}[h]
\centering
\caption{\textbf{Performance comparison across all tasks and baselines in the Humanoid environment.} The best value within 1 standard deviation for each row is highlighted.}
\begin{tabular}{lccccccc}
\toprule
Task & Threshold & RoboCLIP & DTW & OT & TOT & ORCA (NP) & \textbf{ORCA} \\
\midrule
Arm up (L)  & 7.21 (2.98)  & 5.17 (2.25)  & 11.38 (3.49) & 5.42 (2.75)  & 5.29 (2.22)  & 65.88 (8.25)  & \tbcolorg \textbf{81.62 (3.65)} \\
Arm up (R)  & 13.58 (2.70) & 34.17 (7.26) & 1.12 (0.69)  & 19.12 (4.10) & 7.67 (2.88)  & \tbcolorg \textbf{92.46 (4.71)} & 49.58 (5.00) \\
Arms out    & 20.79 (4.47) & 7.75 (5.00)  & 4.75 (2.36)  & 3.75 (1.84)  & 1.62 (0.75)  & \tbcolorg \textbf{72.67 (10.09)} & 8.50 (2.60) \\
Arms down   & 0.00 (0.00)  & 0.67 (0.56)  & 3.17 (2.31)  &  \tbcolorg \textbf{30.38 (7.19)} & 11.62 (3.56) & 19.71 (5.03) & \tbcolorg \textbf{33.42 (7.20)} \\
\midrule
Average     & 10.40 (2.54) & 11.94 (3.77) & 5.10 (2.22)  & 14.67 (3.97) & 6.55 (2.35)  & \tbcolorg \textbf{62.68 (7.02)} & 43.28 (4.61) \\
\bottomrule
\end{tabular}
\label{tab:mujoco_table_full}
\end{table}
Fig.~\ref{fig:humanoid_all_training} shows training curves of all methods given interpolated demonstrations in the humanoid environment, which are naturally temporally misaligned. Tab. \ref{tab:mujoco_table_full} shows the final cumulative reward of all methods. Across all tasks, \orcanp{} performs the best. In general, \orcanp{} performs better than \orca{} because \tot{} achieves near 0 performance, and \orca{} cannot reap the benefits of better initialization. All methods perform poorly on arms down because it is an extremely unstable position. 

Fig.~\ref{fig:humanoid_qualitative} shows a qualitative comparison between frame-level matching approaches and ORCA on the left arm up task. ORCA quickly solves the task, while the other approaches have the failure modes described in Sec. \ref{sec:dist_fail}.

\begin{figure*}[h]
    \centering
    \includegraphics[width=\linewidth]{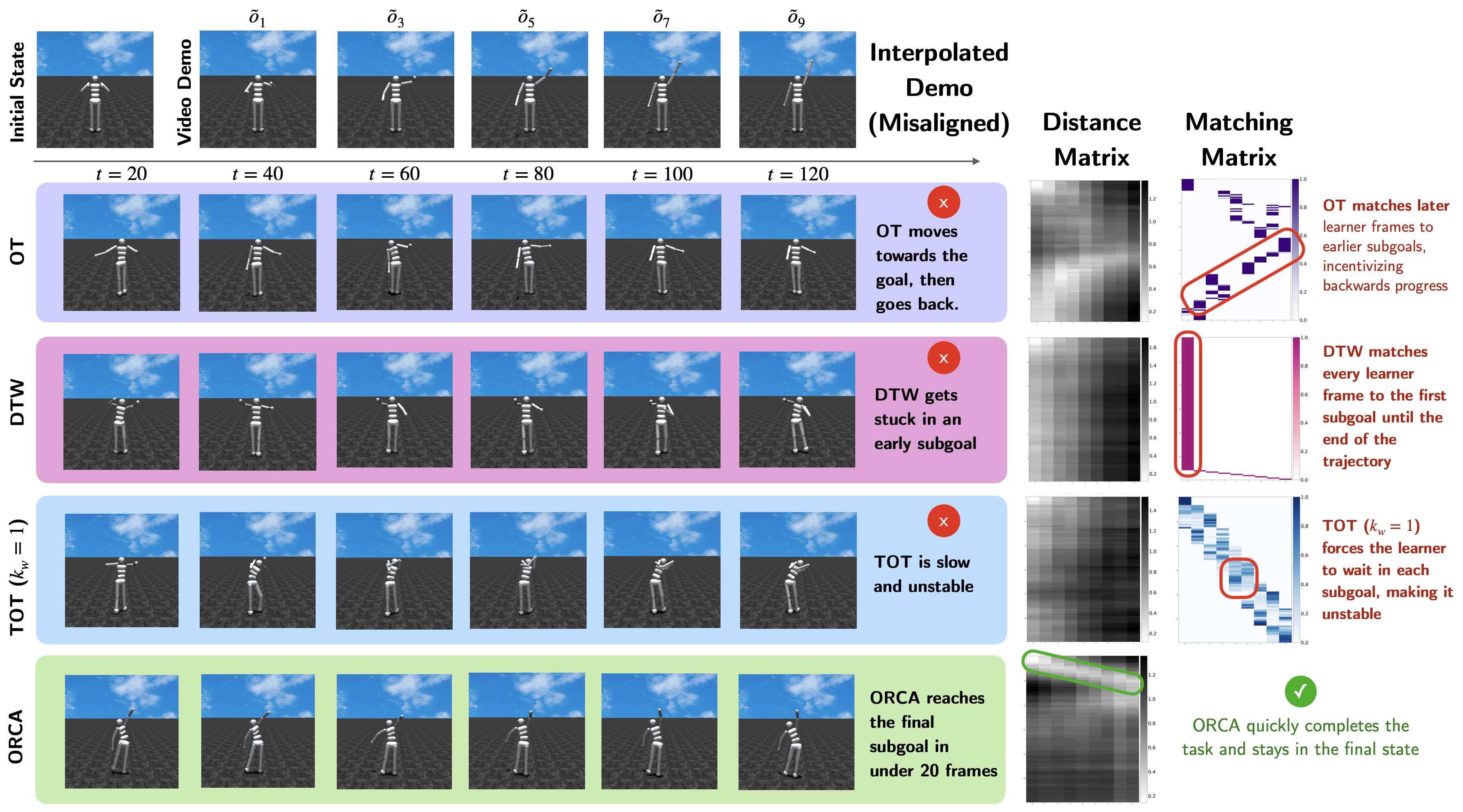}
    \caption{\small \textbf{Qualitative comparison between the frame-level matching approaches and \orca{} when solving the Mujoco \textit{Arm Up (L)} task}. The \orca{} agent completes the task quickly, while the other methods exhibit the failure cases described in Sec. \ref{sec:dist_fail}}.
    \label{fig:humanoid_qualitative}
\end{figure*}

\textbf{Temporally Aligned Experiment.}
\begin{figure*}[h]
    \centering
    \includegraphics[width=\linewidth]{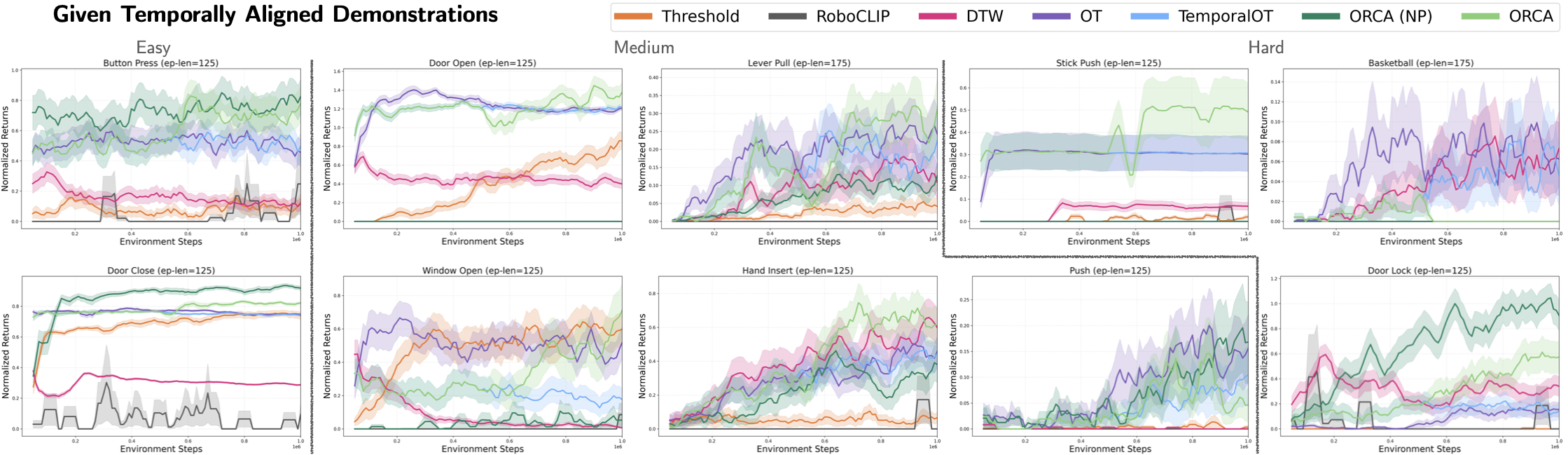}
    \caption{\small \textbf{Training Curves for All Meta-world tasks Given Temporally \emph{Aligned} Demonstrations} We compute the mean and standard error across the 3 training runs, each evaluated on 10 random seeds.
    }
    \label{fig:meta_matched_all_training}
\end{figure*}

\begin{table*}[h!]
\label{tab:metaworld_matched}
\centering
\caption{\small \textbf{Meta-world per-task results on temporally aligned demonstrations.} We report the mean expert-normalized return and the standard error; each task is run on three random seeds. }
\begin{tabular}{lccccccc}
\toprule
Environment & RoboCLIP & Threshold & DTW & OT & TemporalOT & ORCA (NP) & \textbf{ORCA} \\
\midrule
Button-press     & 0.00 (0.00) & 0.17 (0.07) & 0.16 (0.05) & 0.45 (0.09) & 0.61 (0.09) & \tbcolorg \textbf{0.86 (0.11)} & 0.71 (0.12) \\
Door-close       & 0.47 (0.38) & 0.72 (0.02) & 0.30 (0.01) & 0.75 (0.01) & 0.74 (0.01) & \tbcolorg \textbf{0.92 (0.01)} & 0.81 (0.01) \\
\midrule
Door-open      & 0.00 (0.00) & 0.91 (0.09) & 0.40 (0.05) & 1.22 (0.03) & 1.20 (0.03) & 0.00 (0.00) & \tbcolorg \textbf{1.45 (0.11)} \\
Window-open    & 0.00 (0.00) & \tbcolorg \textbf{0.61 (0.09)} & 0.01 (0.01) & 0.61 (0.09) & 0.13 (0.06) & 0.07 (0.07) & \tbcolorg \textbf{0.64 (0.17)} \\
Lever-pull     & 0.00 (0.00) & 0.03 (0.02) & 0.14 (0.05) & 0.15 (0.06) & \tbcolorg \textbf{0.37 (0.08)} & 0.19 (0.08) & 0.35 (0.09) \\
Hand-insert    & 0.00 (0.00) & 0.11 (0.05) & 0.59 (0.11) & \tbcolorg \textbf{0.73 (0.09)} & 0.56 (0.09) & 0.30 (0.10) & \tbcolorg \textbf{0.64 (0.12)} \\
Push           & 0.00 (0.00) & 0.00 (0.00) & 0.00 (0.00) & \tbcolorg \textbf{0.25 (0.09)} & 0.10 (0.06) & 0.21 (0.09) & 0.12 (0.07) \\
\midrule
Basketball     & 0.00 (0.00) & 0.00 (0.00) & \tbcolorg \textbf{0.11 (0.05)} & 0.06 (0.03) & 0.08 (0.08) & 0.00 (0.00) & 0.00 (0.00) \\
Stick-push     & 0.00 (0.00) & 0.01 (0.01) & 0.07 (0.02) & 0.31 (0.08) & 0.30 (0.08) & 0.00 (0.00) & \tbcolorg \textbf{0.48 (0.13)} \\
Door-lock      & 0.00 (0.00) & 0.00 (0.00) & 0.30 (0.08) & 0.16 (0.06) & 0.18 (0.07) & \tbcolorg \textbf{0.75 (0.13)} & 0.53 (0.13) \\
\midrule
\textbf{Average} & 0.05 (0.05) & 0.26 (0.02) & 0.21 (0.02) & 0.47 (0.03) & 0.43 (0.03) & 0.33 (0.03) & \tbcolorg \textbf{0.57 (0.04)} \\
\bottomrule
\end{tabular}
\end{table*}

Fig. \ref{fig:meta_matched_all_training} shows the training curves for all methods in Meta-world with temporally aligned demonstrations, and Tab. \ref{tab:metaworld_matched} shows the final cumulative reward of all methods. Across all tasks, \orca{} performs the best. Since \tot{} achieves good baseline performance on most of these tasks, \orca{} is able to take advantage of pretraining, and outperforms \orcanp{} on most tasks.

\subsection{Frame-Level Failure Modes in Meta-world\label{app:metaworld_qual}}
We show qualitative examples of the failure modes of \ot{}, \tot{}, and \dtw{} described in Sec. \ref{sec:dist_fail} occuring during Meta-world policy training. 

\textbf{OT and TemporalOT Fail to Enforce Ordering:} Fig. \ref{fig:meta_order_fail} shows how \ot{} and \tot{} with a large $k_m$ both fail to enforce subgoal ordering, thus rewarding trajectories that do not complete subgoals in the correct order.
\begin{figure}
    \centering
    \includegraphics[width=\linewidth]{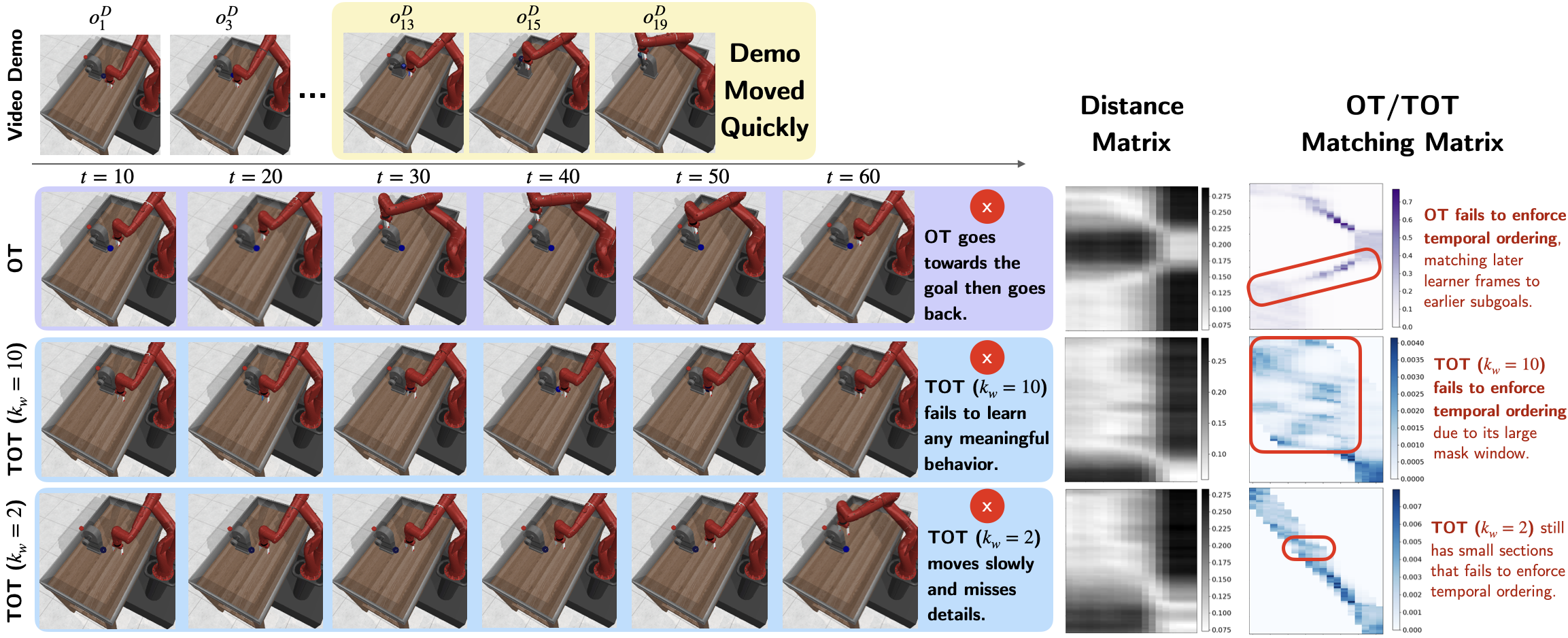}
    \caption{\small \textbf{Example of how \ot{} and \tot{} (depending on the mask window size $k_w$) fail to enforce subgoal ordering when solving a Meta-world task (\textit{Stick-Push}).} }
    \label{fig:meta_order_fail}
\end{figure}

\textbf{DTW Fails to Enforce Subgoal Coverage:} Fig. \ref{fig:meta_dtw_fail} shows how \dtw{} fails to enforce full subgoal coverage, getting stuck in an intermediate subgoal. 
\begin{figure}
    \centering
    \includegraphics[width=\linewidth]{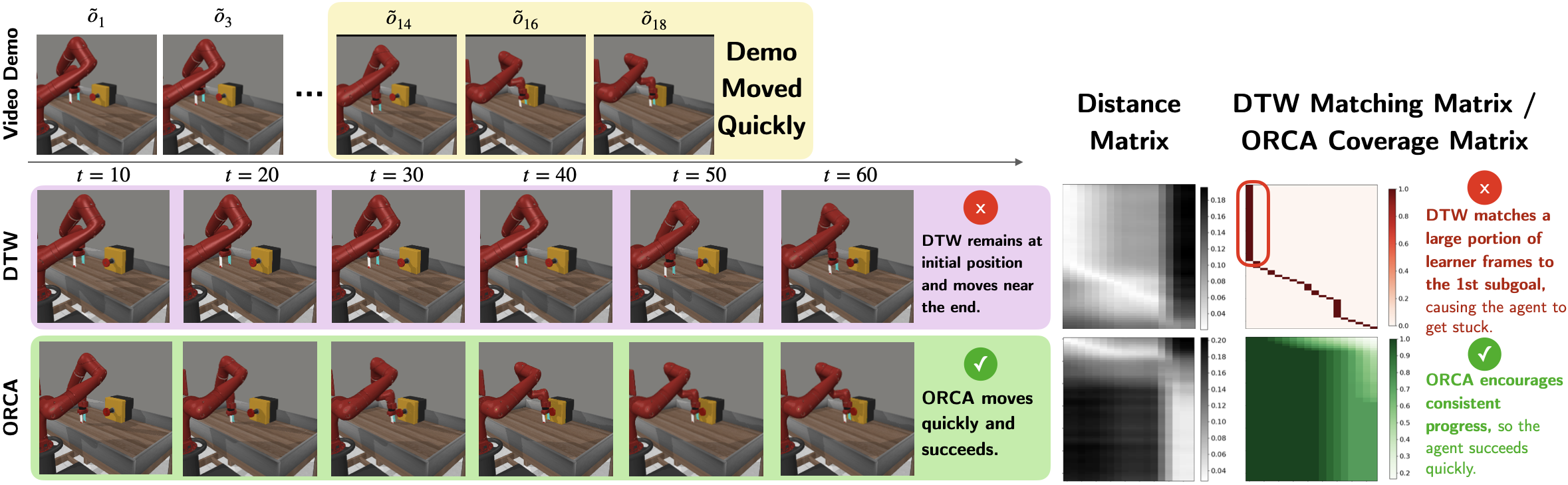}
    \caption{\small \textbf{Example of how \dtw{} fails to enforce full subgoal coverage when solving a Meta-world task (\textit{Lever-pull}).} }
    \label{fig:meta_dtw_fail}
\end{figure}

\subsection{The importance of pretraining for \orca{}}

Fig. \ref{fig:meta_no-pretrain_fail} shows a qualitative example of how pretraining on \tot{} reward helps initialize \orca{} in the correct basin. Without pretraining, \orcanp{} gets partial credit for earlier subgoals, and gets stuck in a local minimum of immediately pushing the bottle without picking up the stick. Meanwhile, \tot{} on its own moves slowly and does not pick up the stick. \orca{} picks up the stick and completes the task quickly.

\begin{figure}
    \centering
    \includegraphics[width=\linewidth]{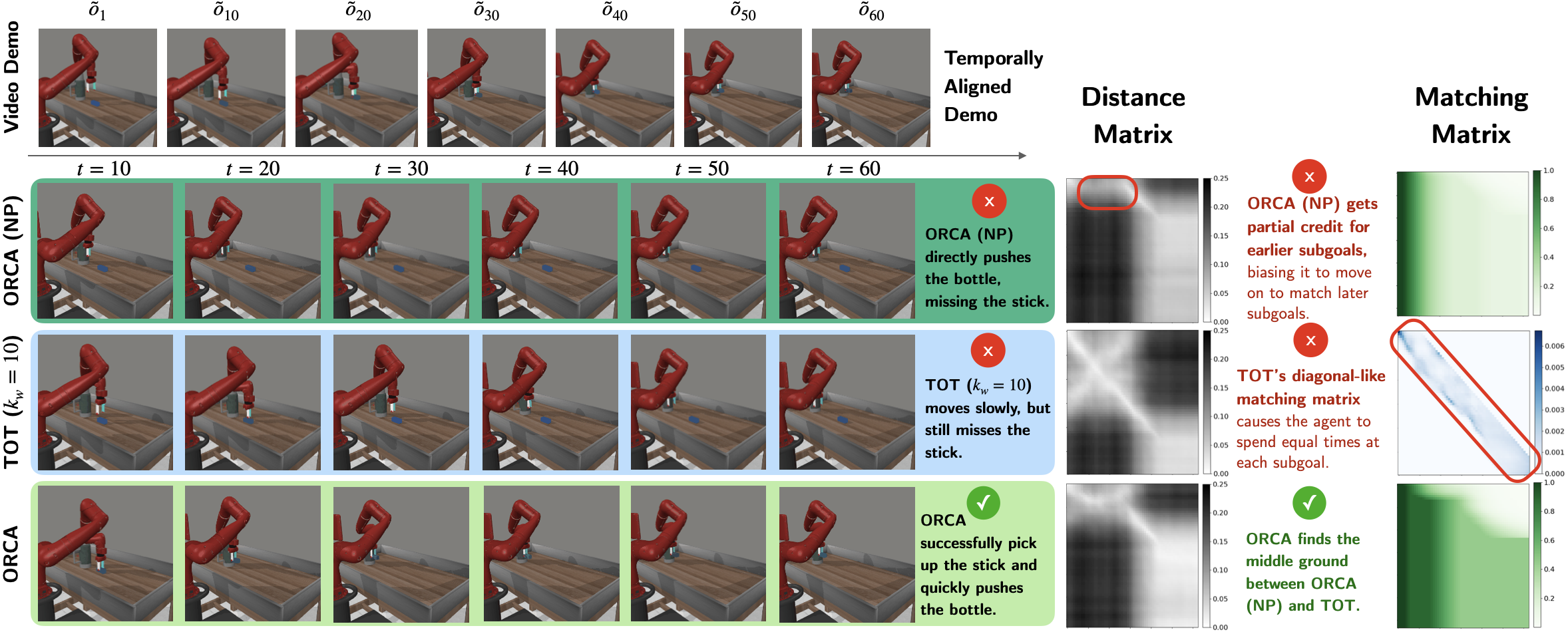}
    \caption{\small \textbf{Example of how pretraining improves \orca{}'s performance.} }
    \label{fig:meta_no-pretrain_fail}
\end{figure}

\subsection{Additional Baselines\label{app:addl_baselines}}

In addition to the baselines described in the main paper, we include a classic IRL algorithm and a text-conditioned reward function. We test these baselines on all easy and medium Meta-world tasks. The results are shown in Table \ref{tab:liv_gaifo_table}.

\textbf{GAIfO} \cite{torabi2019generativeadversarialimitationobservation} is a traditional IRL algorithm. It jointly trains an RL policy and a discriminator between state transitions from the learner and demonstration distributions. The discriminator is used as the reward function for the policy, encouraging the agent to stay within the expert state distribution. Due to computation limitations, we used a state-based demonstration instead of a video. In contrast, ORCA and all existing baselines use video demonstrations. GAIfO outperforms TemporalOT, but it is still significantly worse than ORCA. We hypothesize that, because there is only one demonstration, GAIfO could fixate on inconsequential details instead of estimating the task reward.

\textbf{LIV} \cite{ma2023liv} is a vision language model that can be used to calculate the rewards of a video that teaches given a description of text. For this baseline, we follow the implementation of FuRL [3], which closely resembles our IRL setup. While it outperforms TemporalOT, LIV is significantly worse than ORCA. LIV uses a single text goal to describe the task, whereas ORCA is conditioned on a sequence of images, making ORCA denser and better at capturing details. Future work should explore how ORCA can be applied to sequences of text subgoals.

\begin{table*}[h!]
\label{tab:liv_gaifo_table}
\centering
\caption{\small \textbf{Meta-world per-task results on additional baselines LIV and GAIfO.} We report the mean expert-normalized return and the standard error across different methods and task difficulties.}
\begin{tabular}{llcccc}
\toprule
Category & Task & GAIfO (state-based demo) & LIV (text goal) & TemporalOT & \textbf{ORCA} \\
\midrule
Easy & Button-press & 0.12 (0.04) & 0.00 (0.00) & 0.10 (0.02) & \tbcolorg \textbf{0.62 (0.11)} \\
Easy & Door-close & 0.34 (0.04) & 0.34 (0.09) & 0.19 (0.01) & \tbcolorg \textbf{0.88 (0.01)} \\
Medium & Door-open & 0.22 (0.05) & 0.00 (0.00) & 0.08 (0.01) & \tbcolorg \textbf{0.89 (0.13)} \\
Medium & Window-open & 0.27 (0.09) & 0.72 (0.19) & 0.26 (0.05) & \tbcolorg \textbf{0.85 (0.16)} \\
Medium & Lever-pull & 0.10 (0.03) & 0.00 (0.00) & 0.07 (0.03) & \tbcolorg \textbf{0.28 (0.09)} \\
\midrule
Total & & 0.21 (0.03) & 0.21 (0.05) & 0.14 (0.01) & \tbcolorg \textbf{0.70 (0.05)} \\
\bottomrule
\end{tabular}
\end{table*}

\subsection{Visual Encoder Ablation\label{app:visual_encoder}}
The choice of encoder is a practical, task-dependent choice, and not the main focus of this work. In Metaworld, we followed \citet{fu2024robot} and used a pretrained Resnet50. In Humanoid, we finetuned the encoder on a set of images from the simulation environment.

We include in \ref{tab:visual_encoder_ablation} an ablation of ORCA with LIV \cite{ma2023liv}, a robotics-specific visual encoder, and DINOv2 \cite{oquab2023dinov2}, a standard vision model. Overall, Resnet50 achieves the best performance, although there is high variability.

\begin{table*}[h!]
\label{tab:visual_encoder_ablation}
\centering
\caption{\small \textbf{Comparison of different visual backbones with ORCA.} We report the mean expert-normalized return and the standard error across different backbone architectures with their parameter counts.}
\begin{tabular}{lccc}
\toprule
Task & ORCA+Resnet50 (26M) & ORCA+LIV (100M) & ORCA+DINOv2-L (300M) \\
\midrule
Door-open & \tbcolorg \textbf{1.71 (0.08)} & 1.10 (0.06) & 0.44 (0.12) \\
Window-open & 0.50 (0.14) & \tbcolorg \textbf{1.17 (0.15)} & 0.65 (0.10) \\
Lever-pull & \tbcolorg \textbf{0.28 (0.09)} & 0.04 (0.01) & 0.16 (0.06) \\
\midrule
Total & \tbcolorg \textbf{0.83 (0.09)} & \tbcolorg \textbf{0.77 (0.08)} & 0.42 (0.06) \\
\bottomrule
\end{tabular}
\end{table*}

\subsection{Image-Conditioned Policy \label{app:image_condition}}
ORCA's goal is to estimate rewards from a single demonstration in the observation space before learning a policy, and we pose no assumptions on the policy itself. In this paper, we choose to provide the policy with ground truth states, for better comparisons with prior work. To investigate an image-conditioned policy, we trained an agent using DrQv2 [10] on a subset of the Metaworld tasks. Table \ref{tab:image_policy_comparison} shows the results. The state and image-based policies trained with ORCA have similar performance (average normalized return: 0.704 ± 0.10 → 0.76 ± 0.06). In contrast, policies trained with TemporalOT perform poorly, regardless of their input.
\begin{table*}[h!]
\label{tab:image_policy_comparison}
\centering
\caption{\small \textbf{Comparison of policies trained on ground truth states and image observations.} We report the mean expert-normalized return and the standard error across different policy types.}
\begin{tabular}{lcccc}
\toprule
Task & TemporalOT (state) & TemporalOT (image) & ORCA (state) & ORCA (image) \\
\midrule
Button-press & 0.10 (0.02) & 0.00 (0.00) & \tbcolorg \textbf{0.62 (0.11)} & \tbcolorg \textbf{0.60 (0.11)} \\
Door-close & 0.19 (0.01) & 0.12 (0.01) & \tbcolorg \textbf{0.88 (0.01)} & \tbcolorg \textbf{0.88 (0.02)} \\
Door-open & 0.08 (0.01) & 0.00 (0.00) & 0.89 (0.13) & \tbcolorg \textbf{1.31 (0.12)} \\
Window-open & 0.26 (0.05) & 0.28 (0.05) & \tbcolorg \textbf{0.85 (0.16)} & \tbcolorg \textbf{0.79 (0.15)} \\
Lever-pull & 0.07 (0.03) & 0.00 (0.00) & \tbcolorg \textbf{0.28 (0.09)} & \tbcolorg \textbf{0.19 (0.08)} \\
\midrule
Total & 0.14 (0.02) & 0.08 (0.01) & \tbcolorg \textbf{0.71 (0.10)} & \tbcolorg \textbf{0.76 (0.06)} \\
\bottomrule
\end{tabular}
\end{table*}

\subsection{Runtime Comparison of Approaches\label{app:runtime}}

We experimentally tested the reward calculation latency to validate that ORCA is similar in runtime to its baselines. For each method, we evaluated 100 rollouts with a demonstration of length 100 and learner rollouts of length 100 and 300.

 Table \ref{tab:runtime} shows the average latency (ms) of different reward computation methods. ORCA is 24\% faster than TemporalOT and 52\% faster than RoboCLIP given rollouts of length 100, and it is comparable to threshold, OT, and DTW. LIV is the fastest method because it only considers a text goal and not a sequence of subgoals, but this leads to poor performance on most tasks. The ordering of the methods is the same for 100 and 300 frames, demonstrating that ORCA can efficiently scale to longer horizon tasks.

\begin{table*}[h!]
\label{tab:runtime}
\centering
\caption{\small \textbf{Latency comparison of ORCA and its baselines.} Methods are sorted by their latency in increasing order from left to right. We report the mean performance and standard error given learner rollouts of length 100 and 300. }
\begin{tabular}{lccccccc}
\toprule
  Learner Length & LIV & OT & Threshold & ORCA & DTW & TemporalOT & Roboclip \\
\midrule
100 frames & 38.0 (4.7) & 54.0 (0.4) & 54.7 (3.8) & 56.9 (0.3) & 58.8 (0.4) & {75.1 (1.2)} & 118.2 (17.9) \\
300 frames & 130.0 (3.3) & 170.2 (0.4) & 171.0 (4.2) & 179.9 (0.6) & 185.3 (0.4) & {228.5 (2.0)} & 300.2 (31.3) \\
\bottomrule
\end{tabular}
\end{table*}

\end{document}